%% file: FolT_MCMC_main.tex
\documentclass[a4paper,12pt]{article}

\usepackage[a4paper,margin=2.5cm]{geometry}

\usepackage{amsmath,amssymb,amsthm}
\usepackage{mathtools}
\usepackage{bm}
\usepackage{enumitem}
\usepackage{booktabs}
\usepackage{natbib}
\usepackage{graphicx}
\graphicspath{{figures/}}
\usepackage{placeins}
\usepackage[colorlinks,citecolor=blue,linkcolor=blue,urlcolor=blue]{hyperref}
\usepackage{url}
\usepackage{setspace}
\newtheorem{theorem}{Theorem}[section]
\newtheorem{lemma}[theorem]{Lemma}
\newtheorem{proposition}[theorem]{Proposition}
\newtheorem{corollary}[theorem]{Corollary}
\newtheorem{assumption}[theorem]{Assumption}
\theoremstyle{definition}
\newtheorem{definition}[theorem]{Definition}
\theoremstyle{remark}
\newtheorem{remark}[theorem]{Remark}

\newcommand{\R}{\mathbb{R}}
\newcommand{\E}{\mathbb{E}}
\newcommand{\Prob}{\mathbb{P}}

\newcommand{\osc}{\mathrm{osc}}

\newcommand{\diam}{\mathrm{diam}}
\newcommand{\Vol}{\mathrm{Vol}}

\newcommand{\cK}{\mathcal{K}}
\newcommand{\cN}{\mathcal{N}}

\title{Folded Transport MCMC: Eliminating Label Switching
  by Sampling on a Fundamental Domain}

\author{Jun Hu \\
  {\small Wuhan University of Technology} \\
  {\small \texttt{junhu22@whut.edu.cn}}}

\date{}

\begin{document}
\maketitle

\begin{abstract}
In Bayesian mixture models and other exchangeable-component
models, the posterior is invariant under permutation of
component labels, creating $m!$ equivalent modes---a
phenomenon known as label switching.  Standard MCMC methods
either mix poorly across these modes or rely on post-hoc
relabelling that cannot guarantee the sampler has converged.
We propose \emph{Folded Transport MCMC} (FolT-MCMC), which
eliminates label switching before sampling by restricting the
Markov chain to a \emph{fundamental domain}---a sorted or
reflected subspace containing exactly one representative from
each symmetric mode.  The proposal distribution is a learned
normalising flow whose density is symmetrised over the group
orbits, ensuring correct targeting on the reduced space.
We show that this construction preserves a computable
convergence diagnostic based on the oscillation of the
log-density ratio, and that this diagnostic becomes sharper on the fundamental
domain whenever the original-space flow under-covers one or
more symmetric modes.
Experiments on Gaussian mixtures ($d = 2$--$20$),
label-switching targets (up to $24$ equivalent modes), a
standard Bayesian three-component mixture posterior, and real
accelerometer data from a supertall building demonstrate
improvement ratios of $2\times$ to $145\times$ in the
convergence diagnostic, with the folded diagnostic remaining
stable across dimensions while the unfolded diagnostic
collapses.
\end{abstract}

\noindent
\textbf{Keywords:}
Markov chain Monte Carlo; label switching; mixture models;
normalising flows; convergence diagnostic.

\bigskip

\input{section1_introduction}
\input{section2_background}
\input{section3_framework}
\input{section4_theory}
\input{section5_experiments}
\input{section6_application}
\input{section7_discussion}

\bibliography{references}

\newpage
\input{supplement}

\end{document}

%% file: section1_introduction.tex

\section{Introduction}
\label{sec:introduction}

\subsection{The label-switching problem}

Bayesian mixture models with $m$ exchangeable components define
posterior distributions with $m!$ equivalent modes: for every
posterior sample $(\mu_1, \sigma_1, \ldots, \mu_m, \sigma_m)$,
every permutation of the component indices
$(\mu_{\tau(1)}, \sigma_{\tau(1)}, \ldots)$ has exactly the
same posterior density.  This phenomenon, known as
\emph{label switching} \citep{Stephens2000,Jasra2005,Celeux2000},
is a central challenge in Bayesian computation for mixture models,
hidden Markov models, factor models, and structural identification
problems with closely-spaced modes
\citep{Au2011,YangLamBeck2019}.

The standard response is \emph{post-hoc relabelling}: run MCMC
on the full $m!$-modal posterior, then reorder the samples after
the fact using a loss function \citep{Stephens2000}, an
equivalence-class algorithm \citep{Papastamoulis2010}, or random
permutations at each step \citep{FruhwirthSchnatter2001}.
These methods improve the interpretability of the output
but do not address the fundamental difficulty: the Markov chain
must still traverse $m!$ symmetric modes, and there is no
guarantee that it has done so adequately.

\subsection{Our approach: sampling on a fundamental domain}

A cleaner alternative is to sample directly on a
\emph{fundamental domain}---a subspace that contains exactly
one representative from each group of equivalent modes.
For a three-component mixture, sorting the components by their
first parameter ($\mu_1 \leq \mu_2 \leq \mu_3$) maps the
$3! = 6$ equivalent modes to a single sorted mode on the
``sorted chamber.''  The posterior restricted to this domain,
which we call the \emph{folded target}, integrates to one
and retains all the inferential content of the original posterior
without the redundant copies.

This idea has appeared informally in the mixture literature
\citep{Celeux2000,FruhwirthSchnatter2001}, but has not previously
been combined with learned proposal distributions that can
efficiently explore the folded target.  We introduce
\emph{Folded Transport MCMC} (FolT-MCMC), which pairs the
fundamental-domain construction with a \emph{normalising flow}---a
learnable, invertible transformation that maps a simple
distribution (e.g., a standard normal) to an approximation of
the target.  Normalising flows have been used as proposal
distributions for Metropolis--Hastings samplers
\citep{Parno2018,Hoffman2019,Gabrie2022}; FolT-MCMC trains the
flow on the folded target and symmetrises its density over the
group orbits to form a \emph{quotient proposal} that covers
the fundamental domain.

\subsection{A computable convergence diagnostic}

A distinctive feature of FolT-MCMC is that the folded sampler
comes with a computable \emph{convergence diagnostic} based
on the oscillation of the log-density ratio between target and
proposal.

For an independence Metropolis--Hastings sampler with target
density $\pi$ and proposal density $q$, the spectral gap
$\gamma$ (which controls the geometric convergence rate) is
bounded below by
\[
  \gamma \;\geq\; \frac{2}{1 + \exp(\mathrm{osc}(h))},
  \qquad
  h = \log\pi - \log q,
\]
where $\mathrm{osc}(h) = \sup h - \inf h$ is the oscillation
of the log-density ratio \citep{MengersenTweedie1996}.
When the flow perfectly matches the target, the ratio is
constant and $\gamma = 1$; as the approximation degrades,
the oscillation grows and $\gamma$ shrinks exponentially.

Computing this bound from finite samples requires controlling
the oscillation over a high-probability region of the posterior
(a credible set).  Recent work by the author has developed a
framework for doing so: a coverage-based empirical oscillation
bound combined with Lipschitz interpolation yields a computable
upper bound on $\mathrm{osc}(h)$ from posterior samples alone
\citep{LCNF,CerTMCMC}.  We call the resulting quantity a
\emph{convergence diagnostic} rather than a spectral-gap
certificate, because it controls the oscillation on the
high-probability core of the posterior but not on the tails
(see Section~\ref{sec:theory} for a precise statement).

The key observation motivating this paper is that
\emph{label switching destroys this diagnostic}.  When the
posterior has $m!$ symmetric modes and the flow can only
approximate one of them, the log-density ratio is large on the
off-modes, inflating the oscillation and rendering the
diagnostic vacuous (i.e., the bound gives $\gamma \approx 0$
even when the sampler is working well empirically).  By folding
onto the fundamental domain, FolT-MCMC removes the cross-mode
oscillation, restoring a meaningful diagnostic.

\subsection{Contributions}

The three main contributions are:
\begin{enumerate}[nosep]
  \item \textbf{Fundamental-domain sampler.}
  For a posterior $\pi$ invariant under a finite group $G$
  (e.g., $G = S_m$ for $m$-component mixtures), we define
  the folded target on the fundamental domain, prove that its
  credible sets are simply the intersection of the original
  credible sets with the domain, and construct a symmetrised
  proposal from a learned normalising flow
  (Section~\ref{sec:framework}).

  \item \textbf{Sharper convergence diagnostic.}
  We show that the oscillation-based diagnostic transfers
  to the fundamental domain with improved inputs: the
  credible-set diameter does not increase, the density floor
  improves by a factor of $|G|$, and the covering radius
  decreases.  When the flow trained on the original space
  under-covers one or more symmetric modes (as observed in our
  experiments), the folded diagnostic is sharper
  (Section~\ref{sec:theory}).

  \item \textbf{Empirical validation.}
  On Gaussian mixtures ($d = 2$--$20$), label-switching
  targets (up to $24$ equivalent modes), a standard Bayesian
  three-component mixture posterior, and real accelerometer
  data from a supertall building during Typhoon Mangkhut,
  the diagnostic improvement ratio ranges from $2\times$ to
  $145\times$.  The folded diagnostic is empirically stable
  across dimensions, while the unfolded diagnostic collapses.
  On the Bayesian mixture benchmark, post-hoc relabelling
  recovers correct point estimates but provides no convergence
  guarantee, demonstrating that quotient-space sampling is
  needed for diagnosing posterior convergence
  (Sections~\ref{sec:experiments}--\ref{sec:application}).
\end{enumerate}

\subsection{Outline}

Section~\ref{sec:background} provides background on normalising
flows as MCMC proposals, the oscillation-based convergence
diagnostic, and symmetry in Bayesian models.
Section~\ref{sec:framework} develops the FolT-MCMC framework.
Section~\ref{sec:theory} establishes the theoretical guarantees.
Section~\ref{sec:experiments} presents the experiments.
Section~\ref{sec:application} applies FolT-MCMC to structural
modal identification with real typhoon data.
Section~\ref{sec:discussion} discusses limitations and extensions.

\paragraph{Terminology and notation.}
Throughout, we use the following terms:
\emph{normalising flow}---a learnable invertible map
  $T\colon \mathbb{R}^d \to \mathbb{R}^d$ that transforms a
  simple base distribution to an approximation of the target;
\emph{independence Metropolis--Hastings (IMH)}---a
  Metropolis--Hastings sampler whose proposals are drawn
  independently of the current state;
\emph{spectral gap} ($\gamma$)---a number in $[0,1]$ measuring
  the geometric convergence rate of a Markov chain;
\emph{oscillation} ($\mathrm{osc}(h)$)---the range
  $\sup h - \inf h$ of the log-density ratio
  $h = \log\pi - \log q$;
\emph{fundamental domain} ($D$)---a subspace containing one
  representative per symmetric mode (e.g., the sorted chamber);
\emph{folded target} ($\pi_F$)---the posterior restricted and
  renormalised to the fundamental domain;
\emph{convergence diagnostic}---the computable lower bound on
  $\gamma$ obtained from finite samples via the oscillation
  framework.
All acronyms are defined at first use.

%% file: section2_background.tex

\section{Background and notation}
\label{sec:background}

\subsection{Transport MCMC and independence Metropolis--Hastings}
\label{sec:bg_transport}

A normalising flow $T\colon \R^d \to \R^d$ is a learnable
diffeomorphism that pushes a simple base density~$q_0$
(typically $\cN(0, I_d)$) to an approximation
$q = T_{\#}q_0$ of a target density~$\pi$.
In transport MCMC, the flow serves as the proposal mechanism
for a Markov chain whose stationary distribution is
exactly~$\pi$, correcting any approximation error in~$q$
through the Metropolis--Hastings acceptance step
\citep{Parno2018,Hoffman2019,Gabrie2022}.

The simplest such chain is \emph{independence
Metropolis--Hastings} (IMH): given the current state~$\theta$,
propose $\theta' \sim q$ and accept with probability
$\min(1,\, e^{h(\theta) - h(\theta')})$, where
$h = \log\pi - \log q$ is the log-density ratio.
IMH is particularly well suited to transport proposals because
the proposal is cheap to evaluate (a single forward pass through
the flow) and the acceptance probability depends only on the
quality of the density-ratio fit.

The spectral gap~$\gamma$ of the IMH chain---measuring the
rate of geometric convergence---is bounded below by the
Mengersen--Tweedie inequality \citep{MengersenTweedie1996}:
\begin{equation}\label{eq:mt_bound}
  \gamma
  \;\geq\;
  \frac{2}{1 + \exp\!\bigl(\osc(h)\bigr)},
  \qquad
  \osc(h)
  := \sup_\theta h(\theta) - \inf_\theta h(\theta).
\end{equation}
When the flow approximation is perfect ($q = \pi$),
$\osc(h) = 0$ and $\gamma = 1$; as the approximation degrades,
$\osc(h)$ grows and $\gamma$ shrinks exponentially.

\subsection{A computable convergence diagnostic}
\label{sec:bg_lcnf}

The oscillation $\osc(h)$ is not directly computable from
finite samples, because both the supremum and infimum range
over the entire support of~$\pi$.  Recent work by the author
\citep{LCNF} provides a computable upper bound on the
oscillation over a highest posterior density (HPD) credible
set~$\cK_\alpha$---the smallest region containing
$(1{-}\alpha)$ of the posterior mass---yielding a
computable core lower-bound diagnostic, using three ingredients.

\paragraph{Spectrally normalised RealNVP.}
The flow~$T$ is a RealNVP---a coupling-based normalising flow
architecture \citep{DinhSohlDickstein2017} in which each layer transforms
half of the coordinates conditionally on the other half---built
from $L$~coupling layers.  Each conditioner network is
\emph{spectrally normalised}: a weight-matrix constraint that
bounds the Lipschitz constant of the network
\citep{Miyato2018}, here combined with a scale clip~$c$.
This gives a per-layer bi-Lipschitz bound, ensuring that the
Jacobian and its log-determinant are well controlled.

\paragraph{Empirical oscillation theorem.}
Given $n$~i.i.d.\ samples $\{Z_i\}$ from~$\pi$ and the
$(1{-}\alpha)$-HPD set $\cK_\alpha$, the empirical oscillation
theorem of \citet{LCNF} states
that with probability $\geq 1 - \delta$:
\begin{equation}\label{eq:lcnf_thm}
  \osc_{\cK_\alpha}(h)
  \;\leq\;
  \underbrace{\max_i h(Z_i) - \min_i h(Z_i)}_{\widehat{\osc}_n}
  \;+\;
  \underbrace{2\,M\,\varepsilon^*}_{\text{interpolation error}},
\end{equation}
where $M = \sup_{\cK_\alpha}\|\nabla h\|$ is the local gradient
constant and $\varepsilon^*$ is the minimal covering radius
satisfying a covering condition that depends on
$\diam(\cK_\alpha)$, $\pi_{\min}$, and~$n$.
The right-hand side is fully computable from samples and
network parameters.

\paragraph{Quantile-core diagnostic.}
The HPD-core oscillation bound~\eqref{eq:lcnf_thm}
can still be dominated by extreme density-ratio values
among the retained core samples.  The quantile-core extension
of \citet{CerTMCMC} replaces the full HPD set with a
$\rho$-trimmed core that excludes the highest-oscillation
fraction, yielding a tighter \emph{quantile-core} (QC) diagnostic.
In all experiments of the present paper, we report the
quantile-core diagnostic at $\rho = 0.05$.

\paragraph{Limitation: multimodality.}
When $\pi$ has $k$~well-separated modes and the flow can
only Gaussianise one of them, the log-density ratio~$h$
is large on the off-modes (where $q \ll \pi$), inflating
$\widehat{\osc}_n$ by a cross-mode gap that grows with
mode separation and dimension.
This is the fundamental barrier that FolT-MCMC is designed
to overcome.

\subsection{Symmetry in Bayesian inference}
\label{sec:bg_symmetry}

Many Bayesian posteriors possess a finite symmetry group~$G$
under which the likelihood and prior are jointly invariant.
The resulting symmetric multimodality is a well-known
obstacle to MCMC mixing and inference.

\paragraph{Label switching.}
In $m$-component mixture models, the likelihood is invariant
under permutation of component labels ($G = S_m$,
$|G| = m!$).  This creates $m!$~equivalent posterior
modes, between which standard MCMC chains mix poorly
\citep{Celeux2000,Jasra2005}.  Existing solutions are
predominantly \emph{post-hoc}: after running MCMC in
the full $m!$-modal space, the samples are relabelled
using a loss function \citep{Stephens2000}, an
equivalence-class representative algorithm
\citep{Papastamoulis2010}, or random permutation at each
step \citep{FruhwirthSchnatter2001}.  These methods do not
address the poor mixing of the underlying chain, nor do they
provide a convergence diagnostic.

\paragraph{Structural modal identification.}
In Bayesian operational modal analysis of buildings and bridges,
closely-spaced structural modes create approximate label-switching
symmetry: the posterior over $m$~sets of modal parameters
(frequency, damping ratio) is nearly $S_m$-invariant when the
modes have similar frequencies \citep{Au2011,YangLamBeck2019}.
This is a key motivation for the present work.

\paragraph{Equivariant normalising flows.}
An alternative approach constructs flows that respect the
symmetry by design: $T(g \cdot z) = g \cdot T(z)$
\citep{Kohler2020,Rezende2019}.  Such equivariant flows reduce
parameter redundancy but must still represent all $|G|$~modes.
FolT-MCMC takes the complementary approach of
\emph{eliminating} the symmetric modes before sampling,
reducing the target to a single representative per orbit.
The two strategies are orthogonal and could in principle be
combined.

%% file: section3_framework.tex

\section{The FolT-MCMC framework}
\label{sec:framework}

Let $\pi$ be a target density on~$\R^d$ that is invariant under a
finite group~$G$ acting on~$\R^d$: $\pi(g \cdot \theta) = \pi(\theta)$
for every $g \in G$ and almost every~$\theta$.  Write $s = |G|$.
Typical examples are label-switching symmetry in mixture models
($G = S_m$, the symmetric group on $m$~components, $s = m!$) and
reflection symmetry ($G = \mathbb{Z}_2$, $s = 2$).

The central idea of FolT-MCMC is to \emph{eliminate} the symmetric
multimodality of~$\pi$ before sampling.  We restrict the target
to a fundamental domain, obtaining the folded target introduced
in Section~\ref{sec:introduction}.  We then run an independence
Metropolis--Hastings sampler on this domain, using a learned
normalising flow as proposal, and compute the convergence
diagnostic from the oscillation of the log-density ratio.

\subsection{The folded target}
\label{sec:quotient_target}

A \emph{fundamental domain} for~$G$ is a measurable set
$D \subset \R^d$ such that $\R^d = \bigcup_{g \in G} g \cdot D$
with pairwise disjoint interiors and
$\pi(g \cdot D \cap g' \cdot D) = 0$ for $g \neq g'$.

\begin{definition}[Folded target]\label{def:quotient_target}
  The \emph{folded} or \emph{quotient target} on~$D$ is
  \begin{equation}\label{eq:pi_F}
    \pi_F(z) \;=\; s \cdot \pi(z), \qquad z \in D.
  \end{equation}
  The corresponding potential is
  $U_F(z) = -\log\pi_F(z) = U(z) - \log s$.
\end{definition}

\noindent
Normalisation is immediate: by the fundamental-domain decomposition,
$\int_D \pi_F = s \int_D \pi = s \cdot (1/s) = 1$.
The density $\pi_F$ retains one representative from each $G$-orbit
of modes, so the symmetric multimodality induced by the group action
is absent; non-symmetric multimodality, if present, may remain.

The next result shows that folding preserves the highest posterior
density (HPD) structure exactly.

\begin{proposition}[HPD identity]\label{prop:hpd}
  Let $u_\alpha$ be the $(1{-}\alpha)$-quantile of $U(\Theta)$ under
  $\Theta \sim \pi$, defining the HPD set
  $\cK_\alpha = \{\theta \in \R^d : U(\theta) \leq u_\alpha\}$.
  Then:
  \begin{enumerate}[nosep]
    \item $u_\alpha^F = u_\alpha - \log s$;
    \item $\cK_\alpha^F
      := \{z \in D : U_F(z) \leq u_\alpha^F\}
      = \cK_\alpha \cap D$ \;(a.e.).
  \end{enumerate}
\end{proposition}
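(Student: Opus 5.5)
The plan is to reduce both claims to a single identity: the distribution function of $U$ under $\pi$ coincides with that of $U_F$ under $\pi_F$ after a shift by $\log s$. Concretely, for every threshold $t$ I aim to show
\[
  \Prob_{\Theta\sim\pi}\bigl(U(\Theta)\le t\bigr)
  \;=\;
  \Prob_{Z\sim\pi_F}\bigl(U_F(Z)\le t-\log s\bigr).
\]
Item 1 then follows at once, because quantiles are determined by distribution functions. Item 2 follows because $U_F = U - \log s$ pointwise on $D$.

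\textbf{Step 1: transfer of level sets.} First, $U$ is $G$-invariant almost everywhere, since $\pi$ is. Hence the sublevel set $A_t=\{U\le t\}$ is $G$-invariant up to null sets, and so is its indicator. Using the fundamental-domain decomposition $\R^d=\bigcup_g g\cdot D$, whose overlaps are $\pi$-null, I write
\[
  \int_{A_t}\pi=\sum_{g\in G}\int_{g\cdot D}\mathbf 1_{A_t}\pi .
\]
For each $g$, I substitute $\theta=g\cdot z$. This requires the action to preserve Lebesgue measure, i.e.\ $|\det g|=1$, which holds for coordinate permutations and reflections; I will state it as part of the standing convention for a density to be $G$-invariant. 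Invariance of both $\pi$ and $\mathbf 1_{A_t}$ then makes each summand equal to $\int_{D}\mathbf 1_{A_t}\pi$. This gives
\[
  \int_{A_t}\pi \;=\; s\int_{D\cap A_t}\pi \;=\; \int_{D\cap A_t}\pi_F .
\]
Since $D\cap A_t=\{z\in D: U_F(z)\le t-\log s\}$, the displayed identity follows.

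\textbf{Step 2: quantiles and sets.} I define $u^F_\alpha$ as the $(1-\alpha)$-quantile of $U_F(Z)$ under $Z\sim\pi_F$, using the same generalized-inverse convention, $\inf\{t: F(t)\ge 1-\alpha\}$, as for $u_\alpha$. By Step 1 we have $F_{U_F}(t)=F_U(t+\log s)$, and so $u^F_\alpha=u_\alpha-\log s$. Substituting this into the definition of $\cK^F_\alpha$ gives
\[
  \cK^F_\alpha=\{z\in D: U(z)-\log s\le u_\alpha-\log s\}=\cK_\alpha\cap D .
\]
This holds everywhere, except that "a.e." is needed because $U$, and hence its invariance, is specified only almost everywhere.

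The main obstacle is Step 1's change of variables and null-set bookkeeping. It needs the Jacobian of each $g$ to be unity, and it needs the a.e.\ invariance to survive intersection with the measure-zero boundaries of the sets $g\cdot D$. I would handle this by noting that the boundaries are $\pi$-null by the definition of a fundamental domain. Everything else is a direct rewriting.
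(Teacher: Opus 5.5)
Your proposal is correct and is essentially the paper's argument: the paper passes an arbitrary Borel test function $\varphi(U)$ through $\int_{\R^d}=\sum_{g\in G}\int_{gD}$ with the measure-preserving substitution $\theta=g\cdot z$ to show that $U(Z_F)$ under $Z_F\sim\pi_F$ has the same law as $U(\Theta)$ under $\Theta\sim\pi$, and your choice $\varphi=\mathbf 1_{(-\infty,t]}$ is the distribution-function case of that identity, after which the quantile shift by $\log s$ and the set identity follow exactly as you describe. (Once $u_\alpha^F=u_\alpha-\log s$ is known, $\cK_\alpha^F=\cK_\alpha\cap D$ in fact holds pointwise, since $U_F=U-\log s$ on $D$ by definition, so the ``a.e.'' qualifier is not needed there.)
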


\begin{proof}
  For any Borel function $\varphi\colon \R \to \R$, the $G$-invariance
  of~$\pi$ and the fundamental-domain decomposition give
  \[
    \int_{\R^d} \varphi(U(\theta))\,\pi(\theta)\,\mathrm{d}\theta
    = \sum_{g \in G} \int_{gD} \varphi(U(\theta))\,\pi(\theta)\,\mathrm{d}\theta
    = s \int_D \varphi(U(z))\,\pi(z)\,\mathrm{d}z
    = \int_D \varphi(U(z))\,\pi_F(z)\,\mathrm{d}z.
  \]
  Hence $U(Z_F)$ under $Z_F \sim \pi_F$ has the same distribution as
  $U(\Theta)$ under $\Theta \sim \pi$, so the two quantiles coincide:
  the $(1{-}\alpha)$-quantile of $U_F(Z_F) = U(Z_F) - \log s$
  is $u_\alpha - \log s$.
  Part~(ii) follows directly:
  $\{U_F \leq u_\alpha^F\} = \{U - \log s \leq u_\alpha - \log s\}
  = \{U \leq u_\alpha\}$.
\end{proof}

\begin{corollary}\label{cor:hpd_geometry}
  (a) $\diam(\cK_\alpha^F) \leq \diam(\cK_\alpha)$.
  (b) The density floor satisfies
  $\pi_{\min}^F := \inf_{\cK_\alpha^F} \pi_F
  = s \cdot \inf_{\cK_\alpha \cap D} \pi \geq s \cdot \pi_{\min}$,
  where $\pi_{\min} := \inf_{\cK_\alpha} \pi$.
\end{corollary}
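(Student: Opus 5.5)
The plan is to read both parts directly off Proposition~\ref{prop:hpd}(ii), which identifies $\cK_\alpha^F$ with $\cK_\alpha \cap D$ up to a null set, together with the definition $\pi_F = s\,\pi$ on $D$. No new probabilistic input is needed; the corollary is a set-inclusion and monotonicity argument.

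\textbf{Part (a).} By Proposition~\ref{prop:hpd}(ii), $\cK_\alpha^F = \cK_\alpha \cap D \subseteq \cK_\alpha$. Diameter is monotone under inclusion, since
\[
\sup_{x,y \in A}\|x-y\| \le \sup_{x,y \in B}\|x-y\| \quad\text{whenever } A \subseteq B,
\]
so (a) follows. The one point needing care is the ``a.e.''\ qualifier in Proposition~\ref{prop:hpd}. Here I would note that the identity in part (ii) is in fact a pointwise equality of sets. On $D$ we have $U_F = U - \log s$ exactly, and by part (i) $u_\alpha^F = u_\alpha - \log s$. Hence
\[
\{z\in D: U_F(z) \le u_\alpha^F\} = \{z \in D : U(z)\le u_\alpha\} = \cK_\alpha\cap D
\]
holds pointwise, not merely up to null sets. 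If one prefers essential versions of the sets, the same inclusion holds for essential diameters, and the argument goes through unchanged.

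\textbf{Part (b).} On $\cK_\alpha^F = \cK_\alpha \cap D$ we have $\pi_F = s\,\pi$, so
\[
\inf_{\cK_\alpha^F}\pi_F = s\,\inf_{\cK_\alpha\cap D}\pi,
\]
which is the claimed equality. For the inequality, $\cK_\alpha\cap D \subseteq \cK_\alpha$ and an infimum over a subset is at least the infimum over the superset. This gives $\inf_{\cK_\alpha\cap D}\pi \ge \pi_{\min}$, and multiplying by $s>0$ finishes the argument. As a remark, $G$-invariance of $\pi$ and of $\cK_\alpha$ actually give equality $\inf_{\cK_\alpha\cap D}\pi = \pi_{\min}$ up to boundary/null-set issues, since every point of $\cK_\alpha$ has an orbit representative in $D$. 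The stated inequality does not require this.

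\textbf{Main obstacle.} The only delicate step is the measure-zero bookkeeping: infima and diameters are not a.e.-invariant quantities. I would handle it as described in part (a), by observing that the set identity holds pointwise given the definitions, so no exceptional null set enters.
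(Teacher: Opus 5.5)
Your proposal is correct and follows the paper's proof essentially verbatim. Part (a) comes from the inclusion $\cK_\alpha^F = \cK_\alpha \cap D \subseteq \cK_\alpha$ and monotonicity of the diameter, and part (b) from $\pi_F = s\,\pi$ on $D$ together with monotonicity of the infimum under inclusion. Your observations that the set identity in Proposition~\ref{prop:hpd}(ii) holds pointwise, and that $G$-invariance upgrades the inequality to an equality, match the paper's own derivation of (ii) and its essential-infimum remark.
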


\noindent
Using essential infima with respect to the Lebesgue measure,
the $G$-invariance of~$\pi$ ensures
$\operatorname{ess\,inf}_{\cK_\alpha \cap D} \pi
= \operatorname{ess\,inf}_{\cK_\alpha} \pi$,
so $\pi_{\min}^F = s\,\pi_{\min}$.  With ordinary infima this
equality holds for closed fundamental domains; in general we use
the conservative bound $\pi_{\min}^F \geq s\,\pi_{\min}$.
For label switching with $m = 4$ components, $s = 24$ and the
floor is $24$~times higher.

\subsection{Quotient proposal}
\label{sec:quotient_proposal}

Let $T_F\colon \R^d \to \R^d$ be a normalising flow
(e.g.\ spectrally normalised RealNVP) trained on folded samples
projected onto~$D$, and let
$q_{T_F} = (T_F)_{\#}\cN(0, I_d)$ denote the induced density
on~$\R^d$.

\begin{definition}[Folded (quotient) proposal]\label{def:quotient_proposal}
  The \emph{quotient proposal} on~$D$ is
  \begin{equation}\label{eq:q_F}
    q_F(z) \;=\; \sum_{g \in G} q_{T_F}(g \cdot z), \qquad z \in D.
  \end{equation}
\end{definition}

\noindent
Normalisation follows from the fundamental-domain decomposition:
$\int_D q_F = \sum_g \int_D q_{T_F}(g \cdot z)\,\mathrm{d}z
= \sum_g \int_{gD} q_{T_F}(\theta)\,\mathrm{d}\theta
= \int_{\R^d} q_{T_F} = 1$.
Intuitively, $q_F$ is the unique $G$-symmetric density on~$\R^d$
obtained by summing $q_{T_F}$ over the group orbits;
restricted to~$D$, it gives the proposal used by the sampler.

The log-density ratio governing the Metropolis--Hastings
acceptance is
\begin{equation}\label{eq:h_F}
  h_F(z)
  \;=\; \log\frac{\pi_F(z)}{q_F(z)}
  \;=\; \log\bigl(s \cdot \pi(z)\bigr)
  \;-\; \log\!\Bigl(\sum_{g \in G} q_{T_F}(g \cdot z)\Bigr).
\end{equation}

\begin{remark}[Relation to unfolded MH]\label{rem:not_equivalent}
  Independence MH on~$D$ with target~$\pi_F$ and proposal~$q_F$ is
  a valid Markov chain with stationary distribution~$\pi_F$.  It is
  \emph{not} identical to running MH on~$\R^d$ and projecting
  samples onto~$D$, because $\pi_F(z)/q_F(z)$ generally differs from
  $\pi(z)/q_{T_F}(z)$.  The two coincide when $q_{T_F}$ is itself
  $G$-invariant.  In practice, when $q_{T_F}$ concentrates on a single
  mode, the off-mode terms $q_{T_F}(g \cdot z)$ for $g \neq e$ are
  negligible for $z$ deep in~$D$, and the two ratios agree
  approximately.
\end{remark}

\subsection{Algorithm}
\label{sec:algorithm}

\begin{figure}[t]
\centering
\includegraphics[width=\textwidth]{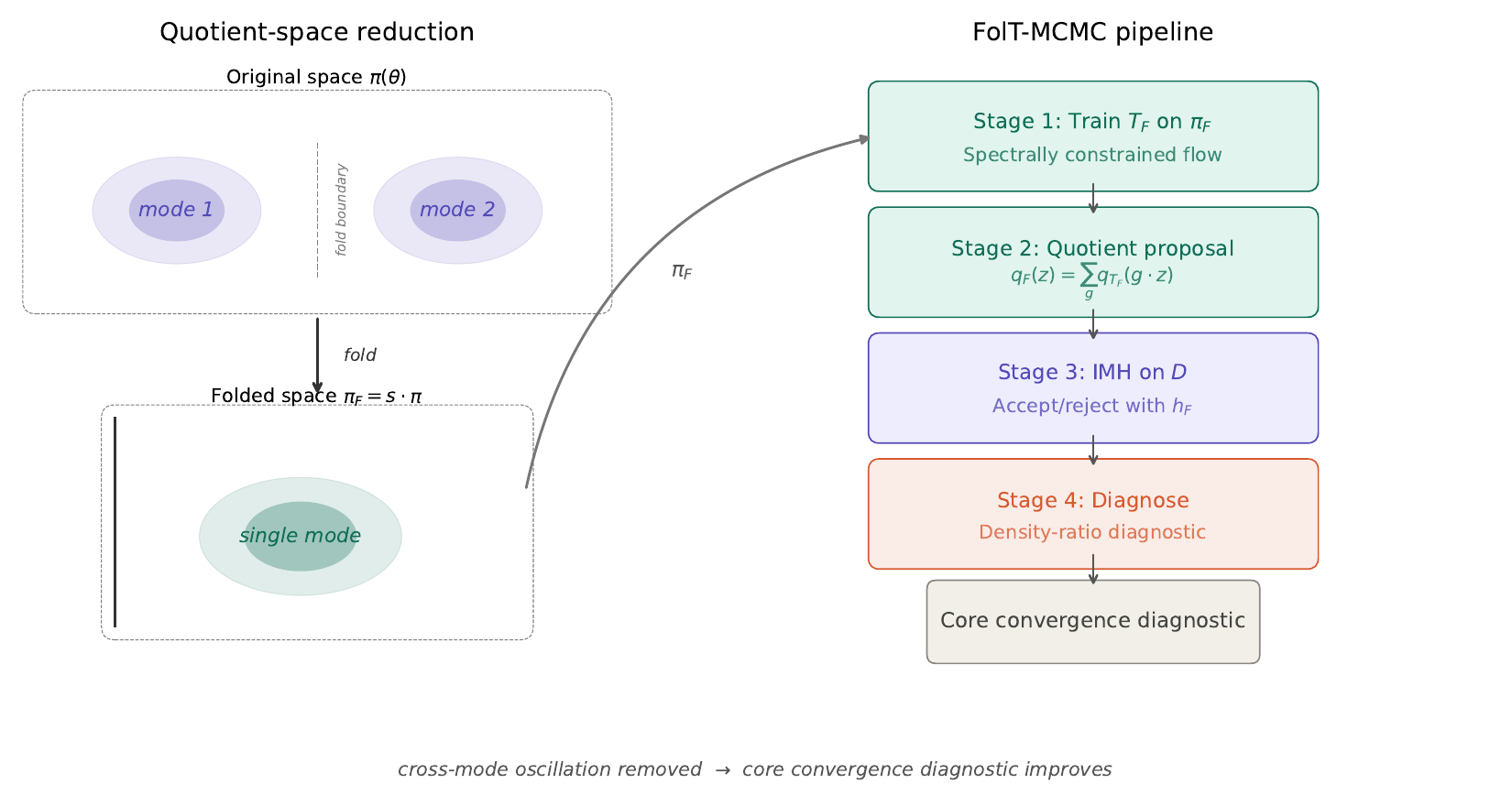}
\caption{FolT-MCMC pipeline.
  Left: the original posterior $\pi$ has $s$ symmetric modes;
  folding onto the fundamental domain~$D$ produces the quotient
  target $\pi_F$ with a single representative mode.
  Right: the four-stage pipeline trains a transport map on $\pi_F$,
  forms the quotient proposal, runs independence MH on~$D$, and
  computes the convergence diagnostic from the oscillation
  framework.}
\label{fig:schematic}
\end{figure}

FolT-MCMC combines folding, transport, sampling, and diagnosis
in the workflow below.

\medskip
\begin{center}
\fbox{\parbox{0.92\textwidth}{%
\setstretch{1.0}\small
\textbf{Algorithm: FolT-MCMC} \\[3pt]
\textit{Input:} target~$\pi$ with known symmetry group~$G$,
  fundamental domain~$D$. \\[2pt]
\textit{Stage~1 (Fold and train).} \\
\quad Draw reference samples $\{z_i\}_{i=1}^N$ from $\pi_F$
  (fold exact or MCMC samples into~$D$). \\
\quad Train a spectrally normalised RealNVP flow $T_F$ on~$\pi_F$
  using negative log-likelihood (NLL) + oscillation
  regularisation. \\
\quad Form the quotient proposal $q_F$ via~\eqref{eq:q_F}. \\[2pt]
\textit{Stage~2 (Sample).} \\
\quad Run independence Metropolis--Hastings on~$D$: \\
\quad\quad Propose $z' \sim q_F$ by sampling $z_0 \sim \cN(0,I)$,
  computing $\theta = T_F(z_0)$, setting $z' = \mathrm{proj}_D(\theta)$. \\
\quad\quad Accept with probability
  $\min\!\bigl(1,\,
  e^{\,h_F(z_{\mathrm{curr}}) - h_F(z')}\bigr)$. \\[2pt]
\textit{Stage~3 (Diagnose).} \\
\quad Compute the empirical oscillation $\widehat{\osc}_n^F$
  and gradient supremum~$\widehat{M}_n^F$
  from an independent diagnostic set on~$D$. \\
\quad Solve the covering condition~\eqref{eq:covering_cond_F}
  for $\varepsilon_F^*$. \\
\quad Report the quantile-core convergence diagnostic
  $\mathrm{QC}\,\underline{\gamma}_F
  \geq 2/(1 + \exp(\widehat{\osc}_n^F + 2\widehat{M}_n^F\varepsilon_F^*))$.
\\[2pt]
\textit{Output:} posterior samples on~$D$;
  quantile-core diagnostic~$\mathrm{QC}\,\underline{\gamma}_F$.
}}
\end{center}

\medskip
\noindent
We highlight two concrete instantiations.

\paragraph{Reflection fold ($G = \mathbb{Z}_2$).}
The non-identity element acts by negating the first coordinate:
$g \cdot \theta = (-\theta_1, \theta_2, \ldots, \theta_d)$.
The fundamental domain is $D = \{\theta_1 \geq 0\}$ and
$\mathrm{proj}_D(\theta) = (|\theta_1|, \theta_2, \ldots, \theta_d)$.
The quotient proposal reduces to
$q_F(z) = q_{T_F}(z) + q_{T_F}(-z_1, z_2, \ldots, z_d)$.

\paragraph{Permutation fold ($G = S_m$).}
Each $\theta \in \R^d$ is partitioned into $m$~blocks of $p$~parameters
($d = mp$), and $G = S_m$ permutes blocks.
The fundamental domain is the sorted chamber
$D = \{\theta : \theta^{(1)}_1 \leq \theta^{(2)}_1 \leq \cdots
\leq \theta^{(m)}_1\}$, where $\theta^{(j)}_1$ is the first parameter
(e.g.\ natural frequency) of the $j$th block.
Projection sorts the blocks by this parameter.

\subsection{Quotient metric and covering}
\label{sec:quotient_metric}

The empirical oscillation framework of \citet{LCNF} relies on a covering
lemma (Lemma~5.2 therein) that lower-bounds the probability mass
of balls intersected with the HPD set.  On the fundamental domain~$D$,
Euclidean balls near the fold boundary $\partial D$ may be truncated,
potentially losing mass.  We resolve this by working in the
\emph{quotient metric}.

\begin{definition}[Quotient metric]\label{def:quotient_metric}
  For $\theta, \theta' \in \R^d$, define
  \begin{equation}\label{eq:d_G}
    d_G(\theta, \theta')
    \;=\; \min_{g \in G}\, \|\theta - g \cdot \theta'\|.
  \end{equation}
  This descends to a well-defined metric on the orbit space $\R^d/G$.
\end{definition}

\noindent
Since $d_G \leq \|\cdot\|$ (take $g = e$), any Euclidean
$\varepsilon$-cover is automatically a quotient $\varepsilon$-cover,
giving the covering-number bound
\begin{equation}\label{eq:covering_number}
  \cN_G(\cK_\alpha^F, \varepsilon)
  \;\leq\;
  \cN_E(\cK_\alpha^F, \varepsilon)
  \;\leq\;
  \Bigl(\frac{2\diam(\cK_\alpha^F)}{\varepsilon} + 1\Bigr)^{\!d}.
\end{equation}

The advantage of the quotient metric emerges when bounding the
\emph{ball mass}.  For a point $z \in D$ whose stabiliser is trivial
($\mathrm{Stab}(z) = \{e\}$), the quotient ball
$B_G(z, \varepsilon) := \{z' : d_G(z, z') \leq \varepsilon\}$
corresponds in $\R^d$ to the union
$\bigcup_{g \in G}\bigl(B(g\!\cdot\!z,\,\varepsilon) \cap g\!\cdot\!D\bigr)$,
which has total volume equal to that of a full Euclidean ball
$V_d\,\varepsilon^d$---no half-ball or boundary correction is
needed.  For well-separated mixture posteriors, the HPD set
$\cK_\alpha^F$ lies in the interior of~$D$ and all stabilisers are
trivial; we formalise this as a regularity condition.

\begin{assumption}[Generic stabiliser]\label{asm:generic_stab}
  $|\mathrm{Stab}(z)| = 1$ for every $z \in \cK_\alpha^F$.
\end{assumption}

\noindent
Assumption~\ref{asm:generic_stab} holds whenever the mode centres are
distinct after projection into~$D$ and the modes are sufficiently
well separated that the HPD set does not touch the fold boundary.
When it fails (e.g.\ at fixed-point strata of the group action),
the ball-mass bound acquires a correction factor
$1/h_{\max}$ where
$h_{\max} = \max_{z \in \cK_\alpha^F} |\mathrm{Stab}(z)|$;
see Supplement~\ref{app:stabiliser}.

The remaining ingredient is a Lipschitz property of $h_F$ under
the quotient metric.

\begin{lemma}[Quotient Lipschitz constant]\label{lem:lip_quotient}
  Let $h_F$ be extended to $\R^d$ by
  $h_F(\theta) = \log\bigl(s\pi(\theta)\bigr)
  - \log\bigl(\sum_g q_{T_F}(g\theta)\bigr)$,
  and let
  $M_F = \sup\bigl\{\|\nabla h_F(w)\| :
  d_G(w, \cK_\alpha^F) \leq \varepsilon_F^*\bigr\}$.
  Then for all $x, y \in \cK_\alpha^F$ with $d_G(x,y) \leq \varepsilon_F^*$:
  \begin{equation}\label{eq:lip_dG}
    |h_F(x) - h_F(y)| \;\leq\; M_F \cdot d_G(x, y).
  \end{equation}
\end{lemma}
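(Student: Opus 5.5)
The key observation is that the extended function $h_F$ is $G$-invariant on $\R^d$. The $G$-invariance of $\pi$ gives $\pi(g'\theta)=\pi(\theta)$. The orbit sum satisfies $\sum_{g}q_{T_F}(g g'\theta)=\sum_{g}q_{T_F}(g\theta)$, by reindexing $g\mapsto gg'^{-1}$ over the finite group. Hence $h_F(g'\theta)=h_F(\theta)$ for every $g'\in G$. For $\pi$ this holds a.e.; I will assume, as is implicit in the definition of $M_F$ via $\nabla h_F$, that $\pi$ and $q_{T_F}$ are $C^1$, so that the identity holds everywhere by continuity.

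Now fix $x,y\in\cK_\alpha^F$ with $d_G(x,y)\le\varepsilon_F^*$. Since $G$ is finite, the minimum in \eqref{eq:d_G} is attained at some $g^\star$ with $\|x-g^\star y\|=d_G(x,y)$. Put $y^\star=g^\star y$. By invariance, $h_F(y)=h_F(y^\star)$, so it suffices to bound $|h_F(x)-h_F(y^\star)|$.

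Next I apply the mean value inequality along the segment $w(t)=(1-t)x+ty^\star$, $t\in[0,1]$:
\[
|h_F(x)-h_F(y^\star)|\le \sup_{t\in[0,1]}\|\nabla h_F(w(t))\|\,\|x-y^\star\|.
\]
It remains to show that each $w(t)$ lies in the region over which $M_F$ is the supremum. Since $x\in\cK_\alpha^F$, we have
\[
d_G(w(t),\cK_\alpha^F)\le d_G(w(t),x)\le\|w(t)-x\|=t\|y^\star-x\|\le d_G(x,y)\le\varepsilon_F^*,
\]
using $d_G\le\|\cdot\|$. So every point of the segment is admissible, the supremum is at most $M_F$, and \eqref{eq:lip_dG} follows.

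The main obstacle is the smoothness and invariance bookkeeping rather than the geometry. The segment from $x$ to $g^\star y$ may leave $D$, which is exactly why $h_F$ is extended to all of $\R^d$. That extension must be differentiable along the whole segment, which requires $q_{T_F}>0$ everywhere (true for a diffeomorphic pushforward of a Gaussian) and $\pi>0$ and $C^1$ near the segment. It also requires the $G$-invariance to hold pointwise rather than only a.e. If $\pi$ is merely a.e.\ invariant, I would first replace it by its $G$-average, which changes nothing a.e., and then argue as above.
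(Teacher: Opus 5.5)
Your proof is correct and matches the paper's argument essentially step for step. Like the paper, you use $G$-invariance of the extended $h_F$ (via reindexing the orbit sum), pick a minimising $g^\star$, and apply the mean value inequality along the segment from $x$ to $g^\star y$; that segment stays in the $\varepsilon_F^*$-neighbourhood because it lies in the Euclidean ball around $x \in \cK_\alpha^F$ and $d_G \le \|\cdot\|$. Your remarks on pointwise invariance and on differentiability along the whole segment make explicit assumptions that the paper leaves implicit in its definition of $M_F$.
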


\begin{proof}
  Fix $x, y \in \cK_\alpha^F$ with $d_G(x,y) \leq \varepsilon_F^*$.
  Let $g^\star \in G$ achieve
  $d_G(x,y) = \|x - g^\star \cdot y\|$.
  Since $d_G(x,y) \leq \varepsilon_F^*$, the segment from $x$
  to $g^\star \cdot y$ lies in the
  $\varepsilon_F^*$-neighbourhood of $\cK_\alpha^F$.
  By the mean value theorem,
  \[
    |h_F(x) - h_F(g^\star \cdot y)|
    \leq M_F \cdot \|x - g^\star \cdot y\|
    = M_F \cdot d_G(x,y).
  \]
  Using $h_F(g^\star \cdot y) = h_F(y)$ ($G$-invariance) completes
  the proof.
\end{proof}

\noindent
With the covering number~\eqref{eq:covering_number}, the ball-mass
bound, and Lemma~\ref{lem:lip_quotient}, the empirical
oscillation theorem of \citet[Theorem~5.1]{LCNF} carries over to the
quotient setting.  The covering condition becomes
\begin{equation}\label{eq:covering_cond_F}
  \Bigl(\frac{D_F}{\varepsilon} + 1\Bigr)^{\!d}
  \cdot
  \bigl(1 - (s/h_{\max})\,\pi_{\min}^D\, V_d\, \varepsilon^d\bigr)^{n_{\mathrm{eff}}}
  \;\leq\; \frac{\delta}{3},
\end{equation}
where $D_F = \diam(\cK_\alpha^F)$,
$\pi_{\min}^D = \inf_{\cK_\alpha \cap D}\pi$, and
$h_{\max} = \max_{z \in \cK_\alpha^F}|\mathrm{Stab}(z)|$.
Under Assumption~\ref{asm:generic_stab}, $h_{\max} = 1$ and the
mass factor simplifies to $\pi_{\min}^F \cdot V_d \cdot \varepsilon^d$.
The resulting quantile-core diagnostic is developed in
Section~\ref{sec:theory}.

\begin{remark}[Computational scalability]\label{rem:scalability}
  The orbit sum in~\eqref{eq:q_F} has $s = |G|$ terms.
  For $S_m$ with $m \leq 5$ ($s \leq 120$), this is inexpensive.
  For larger~$m$, the sum can be approximated by restricting to
  permutations that lie within a neighbourhood of the identity,
  since distant permutations contribute negligibly when the modes
  are well separated.
\end{remark}

%% file: section4_theory.tex

\section{Theoretical guarantees}
\label{sec:theory}

We now establish that the oscillation framework of \citet{LCNF}
transfers to the quotient setting of
Section~\ref{sec:framework}, and that the resulting
quantile-core convergence diagnostic is at least as sharp
as---and typically much sharper than---the bound obtained on
the original space.

\subsection{The folded convergence diagnostic}
\label{sec:folded_cert}

The empirical oscillation theorem of \citet[Theorem~5.1]{LCNF}
requires three regularity conditions on the target, the HPD set,
and the log-density ratio.  We verify each in turn for the quotient
setting $(\pi_F, \cK_\alpha^F, h_F)$.

\paragraph{Regularity condition 1 (Compact HPD set with positive density floor).}
By Proposition~\ref{prop:hpd}, $\cK_\alpha^F = \cK_\alpha \cap D$ is
compact (intersection of a compact set and a closed domain).
Its diameter satisfies $D_F \leq D_U := \diam(\cK_\alpha)$
(Corollary~\ref{cor:hpd_geometry}a), and its density floor satisfies
$\pi_{\min}^F \geq s\,\pi_{\min} > 0$
(Corollary~\ref{cor:hpd_geometry}b).

\paragraph{Regularity condition 2 (Smooth HPD boundary).}
The boundary $\partial\cK_\alpha^F$ consists of three parts:
(i)~the HPD level set $\{U_F = u_\alpha^F\} \cap \mathrm{int}(D)$,
which is a smooth $(d{-}1)$-dimensional manifold wherever
$\nabla U_F \neq 0$ (holding generically by Sard's theorem);
(ii)~the fold boundary $\partial D \cap \mathrm{int}(\cK_\alpha^F)$,
which inherits the smoothness of~$\partial D$ (a hyperplane for
reflections, a polyhedral cone for permutations); and
(iii)~the corner set $\{U_F = u_\alpha^F\} \cap \partial D$,
which has Hausdorff dimension at most $d{-}2$ and does not affect
the covering argument.
Under Assumption~\ref{asm:generic_stab}, parts (ii) and (iii)
are empty and $\partial\cK_\alpha^F$ is smooth.

\paragraph{Regularity condition 3 (Local regularity of $h_F$).}
On $\mathrm{int}(D)$, $\pi_F = s\pi$ inherits the smoothness
of~$\pi$ (typically $C^\infty$ for Gaussian mixtures), and
$q_F = \sum_g q_{T_F}(g\,\cdot\,)$ is $C^\infty$ when $T_F$
uses $\tanh$ activations.  Therefore $h_F = \log\pi_F - \log q_F$
is $C^1$ on a neighbourhood of $\cK_\alpha^F$, and the gradient
supremum $M_F$ (Definition in Lemma~\ref{lem:lip_quotient}) is
finite.

\medskip
With all three conditions verified, we state the folded diagnostic.

\begin{theorem}[FolT-MCMC convergence diagnostic]\label{thm:folded_cert}
  Suppose $\pi$ is $G$-invariant with $|G| = s$,
  Assumption~\ref{asm:generic_stab} holds, and $h_F$ is $C^1$ on a
  neighbourhood of $\cK_\alpha^F$.
  Draw $Z_1, \ldots, Z_n \overset{\mathrm{iid}}{\sim} \pi_F$ and
  let $\widehat{\osc}_n^F = \max_i h_F(Z_i) - \min_i h_F(Z_i)$.
  Let $\varepsilon_F^*$ be the smallest $\varepsilon > 0$
  satisfying~\eqref{eq:covering_cond_F} with $h_{\max} = 1$, and
  define $C_F = \widehat{\osc}_n^F + 2 M_F \varepsilon_F^*$.

  Then with probability at least $1 - \delta$:
  \begin{equation}\label{eq:osc_bound_F}
    \osc_{\cK_\alpha^F}(h_F)
    \;\leq\; C_F
    \;=\; \widehat{\osc}_n^F + 2\,M_F\,\varepsilon_F^*.
  \end{equation}
  The resulting $\alpha$-HPD convergence diagnostic is
  \begin{equation}\label{eq:gamma_F}
    \underline{\gamma}_{F,\alpha}
    \;:=\; \frac{2}{1 + \exp(C_F)}.
  \end{equation}
  We call $\underline{\gamma}_{F,\alpha}$ the HPD-core
  convergence diagnostic.
  In all experiments we report the quantile-core convergence
  diagnostic (denoted QC~$\underline{\gamma}$) at $\rho = 0.05$,
  following \citet{CerTMCMC}.
\end{theorem}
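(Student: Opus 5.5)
The plan is to reproduce the covering-plus-interpolation argument of \citet[Theorem~5.1]{LCNF} in the quotient setting $(\pi_F,\cK_\alpha^F,h_F)$, measuring distances with $d_G$ and using Lemma~\ref{lem:lip_quotient} in place of the Euclidean Lipschitz step. Fix $\varepsilon=\varepsilon_F^*$. First, by \eqref{eq:covering_number} choose a finite quotient $\varepsilon/2$-net (or $\varepsilon$-net, matching the constants in \eqref{eq:covering_cond_F}) $\{c_1,\dots,c_N\}\subset\cK_\alpha^F$ with $N\le (D_F/\varepsilon+1)^d$. Second, lower-bound the $\pi_F$-mass of each quotient ball intersected with the HPD set. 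Under Assumption~\ref{asm:generic_stab}, $B_G(c_j,\varepsilon)\cap D$ unfolds in $\R^d$ to $\bigcup_g \bigl(B(gc_j,\varepsilon)\cap gD\bigr)$, which has volume $V_d\varepsilon^d$. By $G$-invariance of $\pi$ and Proposition~\ref{prop:hpd}, this gives
\[
  \pi_F\bigl(B_G(c_j,\varepsilon)\cap\cK_\alpha^F\bigr)\;\ge\; s\,\pi_{\min}^D\,V_d\,\varepsilon^d \;=\;\pi_{\min}^F V_d\varepsilon^d .
\]
This requires the unfolded ball to lie in $\cK_\alpha$ up to a boundary correction, which I would handle exactly as in the original covering lemma (Lemma~5.2 of \citet{LCNF}), using Regularity condition~2.

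Third, a union bound over the $N$ balls shows that the probability that some ball contains none of the $n_{\mathrm{eff}}$ samples is at most $N(1-\pi_{\min}^F V_d\varepsilon^d)^{n_{\mathrm{eff}}}\le\delta/3$ by the choice of $\varepsilon_F^*$ in \eqref{eq:covering_cond_F}. The remaining $2\delta/3$ of the failure budget is spent exactly as in \citet{LCNF}: on the effective-sample/HPD-membership event and on the estimation of $M_F$. On the good event, take any $x\in\cK_\alpha^F$. It lies within $d_G$-distance $\varepsilon$ of some $c_j$, whose ball contains a sample $Z_i$, so $d_G(x,Z_i)\le\varepsilon$ after suitably scaling the net radius. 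Lemma~\ref{lem:lip_quotient} then gives $|h_F(x)-h_F(Z_i)|\le M_F\varepsilon$. Hence
\[
  \sup_{\cK_\alpha^F}h_F\le\max_i h_F(Z_i)+M_F\varepsilon,\qquad \inf_{\cK_\alpha^F} h_F\ge \min_i h_F(Z_i)-M_F\varepsilon .
\]
Subtracting yields \eqref{eq:osc_bound_F}. Finally, \eqref{eq:gamma_F} follows by inserting $C_F$ into the Mengersen--Tweedie bound \eqref{eq:mt_bound}, restricted to the core, exactly as the HPD-core diagnostic is defined in \citet{LCNF}.

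The main obstacle is the ball-mass step: making rigorous that quotient balls centred in $\cK_\alpha^F$ carry mass at least $\pi_{\min}^F V_d\varepsilon^d$ inside $\cK_\alpha^F$. This needs both the trivial-stabiliser assumption, so that the unfolded pieces $B(gc_j,\varepsilon)\cap gD$ are disjoint and tile a full ball, and control of truncation at the HPD level-set boundary. I would first try to inherit the boundary treatment verbatim from \citet{LCNF} by working with the $G$-invariant set $\cK_\alpha$ in $\R^d$ and the Euclidean ball $B(c_j,\varepsilon)$, then fold back. A second, milder issue is that Lemma~\ref{lem:lip_quotient} needs the segment from $x$ to $g^\star Z_i$ to lie in the region where $M_F$ is defined; this holds because $M_F$ is taken over the $\varepsilon_F^*$-neighbourhood in $d_G$.
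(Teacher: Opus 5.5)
Your proposal is essentially the paper's proof: it feeds the same three inputs into the covering-plus-interpolation argument of \citet[Theorem~5.1]{LCNF} and then applies the Mengersen--Tweedie functional, and you simply write out the net and union-bound step that the paper delegates to that theorem. The three inputs are the quotient covering bound from $d_G \le \|\cdot\|$, the $s$-fold ball-mass bound $\pi_{\min}^F V_d \varepsilon^d$ under Assumption~\ref{asm:generic_stab}, and Lemma~\ref{lem:lip_quotient} for interpolation; the loose ends you flag (matching the $\varepsilon$ versus $\varepsilon/2$ net constants to \eqref{eq:covering_cond_F}, and truncation of balls at the HPD level set) are also left to \citet{LCNF} in the paper's proof, so they are not a gap relative to it.
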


\begin{remark}[Scope of the diagnostic]\label{rem:scope}
  The bound~\eqref{eq:gamma_F} controls density-ratio stability
  on the HPD core $\cK_\alpha^F$.  It does not constitute a
  full-support spectral-gap certificate, which would require
  controlling $\sup_D h_F - \inf_D h_F$ including the tails where
  $\pi_F$ has negligible mass.  Full-support certification for
  transport MCMC with unbounded state spaces is an open problem
  shared with the frameworks of \citet{LCNF,CerTMCMC}.
  The contribution of the present paper is
  to restore non-vacuousness of the core diagnostic in the
  presence of symmetric multimodality, not to close the
  core-to-full-support gap.
\end{remark}

\begin{proof}
  Apply \citet[Theorem~5.1]{LCNF} with the quotient-metric covering
  (Section~\ref{sec:quotient_metric}): the covering
  number is bounded by~\eqref{eq:covering_number}, the ball mass by
  $\pi_{\min}^F \cdot V_d \cdot \varepsilon^d$
  (Assumption~\ref{asm:generic_stab}), and the interpolation error by
  $M_F \cdot d_G$ (Lemma~\ref{lem:lip_quotient}).
  Applying the Mengersen--Tweedie lower-bound functional
  \citep{MengersenTweedie1996} to the bounded HPD-core
  oscillation gives the core diagnostic~\eqref{eq:gamma_F}.
\end{proof}

\subsection{Covering radius improvement}
\label{sec:covering_improvement}

The folded diagnostic~\eqref{eq:osc_bound_F} has the same
\emph{form} as the unfolded one, but its three constituent
quantities are more favourable.  We begin with the covering
radius.

\begin{theorem}[Covering radius improvement]\label{thm:eps_improvement}
  Let $\varepsilon_U^*$ denote the smallest $\varepsilon > 0$
  satisfying the unfolded covering condition
  \begin{equation}\label{eq:covering_cond_U}
    \Bigl(\frac{D_U}{\varepsilon} + 1\Bigr)^{\!d}
    \cdot
    \bigl(1 - \pi_{\min}\, V_d\, \varepsilon^d\bigr)^{n_{\mathrm{eff}}}
    \;\leq\; \frac{\delta}{3},
  \end{equation}
  and let $\varepsilon_F^*$ be defined
  by~\eqref{eq:covering_cond_F} with $h_{\max} = 1$.  Then
  $\varepsilon_F^* \leq \varepsilon_U^*$.
\end{theorem}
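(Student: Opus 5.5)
The argument is a monotonicity comparison of the two covering functions. Write
\[
\Phi_U(\varepsilon) = \Bigl(\tfrac{D_U}{\varepsilon}+1\Bigr)^{d}\bigl(1-\pi_{\min}V_d\varepsilon^d\bigr)^{n_{\mathrm{eff}}},
\qquad
\Phi_F(\varepsilon) = \Bigl(\tfrac{D_F}{\varepsilon}+1\Bigr)^{d}\bigl(1-\pi_{\min}^F V_d\varepsilon^d\bigr)^{n_{\mathrm{eff}}},
\]
where $\Phi_F$ is the left-hand side of~\eqref{eq:covering_cond_F} with $h_{\max}=1$. Under Assumption~\ref{asm:generic_stab} its mass factor is $s\,\pi_{\min}^D = \pi_{\min}^F$. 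Both are defined on the range of $\varepsilon$ where the mass factors lie in $[0,1]$.

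The plan is to show $\Phi_F(\varepsilon)\le \Phi_U(\varepsilon)$ pointwise. Then any $\varepsilon$ admissible for~\eqref{eq:covering_cond_U} is also admissible for~\eqref{eq:covering_cond_F}, and taking infima over the feasible sets gives $\varepsilon_F^*\le\varepsilon_U^*$.

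First, compare the covering-number factor. Corollary~\ref{cor:hpd_geometry}(a) gives $D_F\le D_U$, and $t\mapsto (t/\varepsilon+1)^d$ is increasing in $t\ge 0$, so the first factor of $\Phi_F$ is at most that of $\Phi_U$.

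Second, compare the mass factor. Corollary~\ref{cor:hpd_geometry}(b) gives $\pi_{\min}^F = s\,\pi_{\min}^D \ge s\,\pi_{\min}\ge \pi_{\min}$, since $s\ge 1$. Hence $1-\pi_{\min}^F V_d\varepsilon^d \le 1-\pi_{\min}V_d\varepsilon^d$. Raising to the power $n_{\mathrm{eff}}$ preserves the inequality provided both bases are nonnegative. Multiplying two nonnegative factors that are each dominated gives $\Phi_F\le\Phi_U$.

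The main obstacle is that nonnegativity requirement. The expression $(1-pV_d\varepsilon^d)^{n_{\mathrm{eff}}}$ only makes sense as a failure probability when $pV_d\varepsilon^d\le 1$. A larger $\pi_{\min}^F$ shrinks the admissible range of $\varepsilon$ for the folded condition, so the feasible sets do not automatically nest. I would handle this in two steps:
\begin{enumerate}
\item Adopt the convention $(1-x)_+^{n_{\mathrm{eff}}}$, which is also what the ball-mass bound in \citet{LCNF} justifies, because the probability of missing a ball is at most $\max(0,1-\text{mass})$. With this convention $\Phi_F\le\Phi_U$ holds for every $\varepsilon>0$.
\item Observe that if $\pi_{\min}^F V_d\varepsilon^d\ge 1$, then $\Phi_F(\varepsilon)=0\le\delta/3$ trivially, so such $\varepsilon$ are feasible for the folded condition anyway.
\end{enumerate}

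It remains to check that "smallest $\varepsilon$" is well defined, i.e.\ that the feasible sets are closed from the left so the infima are attained. Continuity of $\Phi_U$ and $\Phi_F$ in $\varepsilon$ settles this, and it is equally fine to read $\varepsilon^*$ as an infimum. Because the inclusion of feasible sets holds, the inequality $\varepsilon_F^*\le\varepsilon_U^*$ does not require either function to be monotone in $\varepsilon$.
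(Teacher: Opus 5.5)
Your proposal is correct and follows the paper's proof: both show the pointwise domination $\Phi(D_F,\pi_{\min}^F,\varepsilon)\le\Phi(D_U,\pi_{\min},\varepsilon)$ from $D_F\le D_U$ and $\pi_{\min}^F\ge s\,\pi_{\min}\ge\pi_{\min}$, and then conclude $\varepsilon_F^*\le\varepsilon_U^*$ by minimality. Your positive-part convention $(1-x)_+^{n_{\mathrm{eff}}}$ is a useful tightening. The paper simply asserts that the base lies in $(0,1)$, which need not hold for the folded base at $\varepsilon=\varepsilon_U^*$ once the extra factor $s$ pushes $\pi_{\min}^F V_d(\varepsilon_U^*)^d$ above $1$.
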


\begin{proof}
  Write the left-hand side of the covering condition as
  $\Phi(D_K,\, \pi_{\min},\, \varepsilon)
  = (D_K/\varepsilon + 1)^d\,
  (1 - \pi_{\min}\, V_d\, \varepsilon^d)^{n_{\mathrm{eff}}}$.
  For any fixed $\varepsilon > 0$:
  \begin{enumerate}[nosep]
    \item $D_F \leq D_U$
    (Corollary~\ref{cor:hpd_geometry}a), so the first factor does not
    increase;
    \item $\pi_{\min}^F \geq s\,\pi_{\min} \geq \pi_{\min}$
    (Corollary~\ref{cor:hpd_geometry}b), so
    $\pi_{\min}^F V_d \varepsilon^d \geq \pi_{\min} V_d \varepsilon^d$,
    which makes the base of the second factor smaller and hence the
    second factor does not increase.
  \end{enumerate}
  Therefore $\Phi(D_F, \pi_{\min}^F, \varepsilon)
  \leq \Phi(D_U, \pi_{\min}, \varepsilon)$ for all
  $\varepsilon > 0$.
  Since $\varepsilon_U^*$ satisfies
  $\Phi(D_U, \pi_{\min}, \varepsilon_U^*) \leq \delta/3$, we have
  $\Phi(D_F, \pi_{\min}^F, \varepsilon_U^*) \leq \delta/3$, so by
  minimality $\varepsilon_F^* \leq \varepsilon_U^*$.
  The same monotonicity argument extends to the stabiliser-corrected
  case whenever $(s/h_{\max})\,\pi_{\min}^D \geq \pi_{\min}$, which
  holds in the finite-group settings considered here.
\end{proof}

\begin{remark}[Asymptotic scaling]\label{rem:eps_scaling}
  Under simplifying approximations (dropping the $+1$ term and using
  $\log(1 - p) \approx -p$), the ratio
  $\varepsilon_F^* / \varepsilon_U^*$ scales heuristically as
  $s^{-1/d}$, up to logarithmic factors.  This explains the
  empirical observation that the covering-radius improvement is
  more pronounced in low dimensions
  (Section~\ref{sec:experiments}).  However, the rigorous content
  of Theorem~\ref{thm:eps_improvement} is the inequality
  $\varepsilon_F^* \leq \varepsilon_U^*$, which is independent of
  the cross-mode misspecification condition.
\end{remark}

\subsection{Diagnostic improvement}
\label{sec:gap_improvement}

The covering-radius improvement (Theorem~\ref{thm:eps_improvement})
guarantees that the radius factor $\varepsilon_F^*$ does not
increase.  The full product $M_F\varepsilon_F^*$ need not be
smaller unless $M_F$ is comparable to~$M_U$, which is why the
sufficient condition below keeps both terms explicit.
The dominant source of improvement, however, is the empirical
oscillation $\widehat{\osc}_n^F$, which drops when the quotient
proposal eliminates cross-mode residuals.

We now compare the folded and unfolded diagnostics.
Let $T_U\colon \R^d \to \R^d$ be a normalising flow trained on the
original (unfolded) target~$\pi$, with proposal density
$q_U = (T_U)_{\#}\cN(0,I)$ and log-density ratio
$h_U = \log\pi - \log q_U$.
The unfolded diagnostic is
$C_U = \widehat{\osc}_n + 2 M_U \varepsilon_U^*$, where
$\widehat{\osc}_n$, $M_U$, $\varepsilon_U^*$ are defined
analogously to their folded counterparts but using $h_U$
and~$\cK_\alpha$.

To quantify the oscillation gap we introduce three conditions.

\begin{description}[leftmargin=1.5em,nosep]
  \item[(M\,$'$)]
  \textbf{Relative cross-mode deficiency.}
  For each off-mode component $\cK_\alpha^{(j)}$, $j \neq 1$,
  there exists $\Delta_j > 0$ such that
  \begin{equation}\label{eq:cond_M}
    \inf_{\theta \in \cK_\alpha^{(j)}} h_U(\theta)
    \;\geq\;
    \sup_{\theta \in \cK_\alpha^{(1)}} h_U(\theta)
    \;+\; \Delta_j.
  \end{equation}

  \item[(W)]
  \textbf{Folded within-mode fit.}
  There exists $r_1 \geq 0$ such that
  \begin{equation}\label{eq:cond_W}
    \osc_{\cK_\alpha^F}\!\bigl(\log(s\pi) - \log q_{T_F}\bigr)
    \;\leq\; 2r_1.
  \end{equation}

  \item[(R)]
  \textbf{Folded-proposal residual.}
  \begin{equation}\label{eq:cond_R}
    r_F \;:=\;
    \sup_{z \in \cK_\alpha^F}
    \bigl|\log q_F(z) - \log q_{T_F}(z)\bigr|
    \;<\; \infty.
  \end{equation}
\end{description}

\noindent
Condition~(M$'$) says the unfolded log-density ratio~$h_U$ is
uniformly higher on every off-mode than on the primary mode: the
unfolded flow under-covers those regions by at least a factor
of~$e^{\Delta_j}$.  This holds whenever the spectrally normalised
RealNVP has insufficient capacity to represent multiple separated
modes---a common situation in practice, as illustrated by the experiments
in Section~\ref{sec:experiments} and explicitly checked in
representative cases in Supplement~\ref{app:condition_check}.
Condition~(W) bounds the within-mode oscillation of the folded
flow's unnormalised log-density ratio on~$\cK_\alpha^F$.
Condition~(R) quantifies the contribution of off-mode orbit
terms $q_{T_F}(g \cdot z)$ for $g \neq e$; for well-separated
modes $r_F \approx 0$ since these terms are exponentially small.

Note that (M$'$) involves the unfolded flow~$T_U$ while (W) and
(R) involve the folded flow~$T_F$; this reflects the two-flow
comparison at the heart of FolT-MCMC.

\begin{lemma}[Oscillation gap]\label{lem:osc_gap}
  Under conditions \textup{(M$'$)}, \textup{(W)}, and
  \textup{(R)}, the empirical oscillations on the unfolded and
  folded spaces satisfy, with probability at least $1 - \delta'$,
  \begin{equation}\label{eq:osc_gap}
    \widehat{\osc}_n
    \;\geq\;
    \widehat{\osc}_n^F
    \;+\; \min_{j \neq 1} \Delta_j
    \;-\; 2r_1 - r_F
    \;-\; c_n,
  \end{equation}
  where $c_n = O\!\bigl(\sqrt{\log(2/\delta')/n}\bigr)$ accounts for
  the sampling error in both empirical oscillations.
\end{lemma}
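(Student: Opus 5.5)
The plan is to bound the two empirical oscillations separately, one from below and one from above, and then subtract. I would work with the population oscillations on the relevant core sets and pass to empirical versions through a sampling-error term $c_n$.

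\emph{Lower bound on $\widehat{\osc}_n$.} I would decompose $\cK_\alpha$ into the primary-mode component $\cK_\alpha^{(1)}$ and the off-mode components $\cK_\alpha^{(j)}$, $j\neq 1$. By $G$-invariance of $\pi$ and Proposition~\ref{prop:hpd}, each component carries mass $(1-\alpha)/s$ under $\pi$. Hence among $n$ i.i.d.\ draws from $\pi$, with probability at least $1-\delta'/2$, some $Z_i$ lands in $\cK_\alpha^{(1)}$ and some $Z_k$ lands in an off-mode component $\cK_\alpha^{(j)}$. The failure probability is at most $s\,(1-(1-\alpha)/s)^n$, which is exponentially small and can be absorbed into $c_n$. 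Condition (M$'$) then gives
\[
\widehat{\osc}_n \;\geq\; h_U(Z_k)-h_U(Z_i) \;\geq\; \Delta_j \;\geq\; \min_{j\neq1}\Delta_j .
\]
This yields a lower bound on $\widehat{\osc}_n$ that does not depend on the folded quantities.

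\emph{Upper bound on $\widehat{\osc}_n^F$.} On $\cK_\alpha^F$ I would write
\[
h_F=\bigl(\log(s\pi)-\log q_{T_F}\bigr)-\bigl(\log q_F-\log q_{T_F}\bigr).
\]
The oscillation of a difference is at most the sum of the oscillations. By (W), the first bracket has oscillation at most $2r_1$. By (R), the second bracket takes values in $[-r_F,r_F]$, so its oscillation is at most $2r_F$. In fact $\log q_F\ge \log q_{T_F}$ pointwise, because $q_F$ contains the $g=e$ term. The second bracket therefore lies in $[0,r_F]$ and its oscillation is at most $r_F$, which matches the constant in \eqref{eq:osc_gap}. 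Since $\widehat{\osc}_n^F$ is a maximum over points, it is at most $\osc_{\cK_\alpha^F}(h_F)$ whenever all the $Z_i$ lie in the core. Draws outside the core can be trimmed, as in the quantile-core construction, or accounted for in $c_n$. This gives $\widehat{\osc}_n^F\leq 2r_1+r_F$.

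\emph{Combining and the main obstacle.} Putting the two bounds together gives $\widehat{\osc}_n\geq \min_j\Delta_j \geq \widehat{\osc}_n^F+\min_j\Delta_j-2r_1-r_F$. This is stronger than \eqref{eq:osc_gap} before the $-c_n$ slack is subtracted.

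The step I expect to be delicate is the probabilistic bookkeeping, namely justifying a $c_n$ of order $\sqrt{\log(2/\delta')/n}$. Three issues need handling here. First, the samples must hit both a primary and an off-mode component. Second, samples falling outside the $\alpha$-HPD sets must be either trimmed or controlled. Third, the rate must come from a union bound over the $s$ mode components combined with a Hoeffding or DKW-type bound on the empirical HPD quantile. Because (M$'$) and (W) are stated on population HPD sets while the samples only approximate them, I would first try to define the cores through the empirical quantile. I would then use DKW to show that the population and empirical cores differ by an $O(\sqrt{\log(2/\delta')/n})$ mass, which gives the stated rate for $c_n$.
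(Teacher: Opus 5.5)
\textbf{Gap in the upper bound on $\widehat{\osc}_n^F$.} This step does not go through for the statistic the lemma uses, which is the untrimmed $\max_i h_F(Z_i)-\min_i h_F(Z_i)$ over all draws from $\pi_F$.
\begin{itemize}
\item You reduce to the event that every $Z_i$ lies in $\cK_\alpha^F$. Under $\pi_F$ that event has probability only $(1-\alpha)^n$. With overwhelming probability about $\alpha n$ draws fall outside the core, where (W) and (R) say nothing about $h_F$.
\item Trimming those draws changes the statistic.
\item The excess cannot be absorbed into $c_n$ either. As $n\to\infty$ the untrimmed $\widehat{\osc}_n^F$ converges to the $\pi_F$-essential oscillation of $h_F$ on all of $D$. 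That can be arbitrarily large or infinite, e.g.\ if $q_{T_F}$ has lighter tails than $\pi$. Meanwhile $c_n\to 0$, and (M$'$), (W), (R) constrain nothing outside the cores.
\item Your DKW step controls the $\pi_F$-mass of the region between the empirical and population cores, not the values of $h_F$ there. Turning an $O(\sqrt{\log(2/\delta')/n})$ mass discrepancy into an oscillation discrepancy of the same order needs extra hypotheses. Examples: (W) and (R) on a slightly enlarged core $\cK_{\alpha-\eta}^F$; a gradient bound like $M_F$ near the core; and $\|\nabla U\|$ bounded away from zero on $\{U=u_\alpha\}$, so the shell between the two level sets is thin.
\end{itemize}
What your argument actually establishes is the inequality with $\widehat{\osc}_n^F$ replaced by a core-restricted (or quantile-trimmed) oscillation. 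The paper's proof has the same hole. It asserts $\widehat{\osc}_n^F \le \osc_{\cK_\alpha^F}(h_F)+c_n''$ ``by the same concentration argument'' without addressing out-of-core draws. So this is not an idea the paper supplies and you missed. Your splitting of $h_F$, with $\log q_F-\log q_{T_F}\in[0,r_F]$ via the $g=e$ term, is exactly the paper's.

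\textbf{Lower bound on $\widehat{\osc}_n$: a different and better route.} The paper first shows $\osc_{\cK_\alpha}(h_U)\ge\min_{j\neq1}\Delta_j$. It then invokes a Hoeffding-type extreme-value bound $\widehat{\osc}_n\ge\osc_{\cK_\alpha}(h_U)-c_n'$. That rate is not available in general, because how fast $\max_i h_U(Z_i)$ approaches $\sup h_U$ depends on the mass near the maximiser.

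Because (M$'$) separates entire components uniformly, you need only one draw in $\cK_\alpha^{(1)}$ and one in any off-mode component. This gives $\widehat{\osc}_n\ge\min_{j\neq1}\Delta_j$ with no additive loss. The failure probability is at most $(1-p_1)^n+(1-p_{\mathrm{off}})^n$, where $p_1$ and $p_{\mathrm{off}}$ are the $\pi$-masses of $\cK_\alpha^{(1)}$ and of the off-mode union. Out-of-core draws are harmless in this direction, since extra draws can only increase $\widehat{\osc}_n$.

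Two small corrections:
\begin{itemize}
\item The failure probability belongs in $\delta'$, not in $c_n$. It is below $\delta'/2$ once $n\gtrsim s\log(2s/\delta')/(1-\alpha)$.
\item The per-component mass $(1-\alpha)/s$ presumes $\cK_\alpha$ splits into the $s$ disjoint images $g\cdot\cK_\alpha^F$. Only positivity of $p_1$ and $p_{\mathrm{off}}$ is actually needed.
\end{itemize}
Paired with a core-restricted folded statistic, for which $\widehat{\osc}_n^F\le 2r_1+r_F$ holds deterministically, your argument proves the inequality with $c_n=0$.
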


\begin{proof}
  We bound $\widehat{\osc}_n$ from below and $\widehat{\osc}_n^F$
  from above.

  \emph{Lower bound on $\widehat{\osc}_n$.}\;
  Condition~(M$'$) gives, for any off-mode $j \neq 1$,
  $\inf_{\cK_\alpha^{(j)}} h_U \geq \sup_{\cK_\alpha^{(1)}} h_U
  + \Delta_j$.
  Therefore
  \[
    \osc_{\cK_\alpha}(h_U)
    \;\geq\;
    \inf_{\cK_\alpha^{(j)}} h_U - \inf_{\cK_\alpha^{(1)}} h_U
    \;\geq\;
    \sup_{\cK_\alpha^{(1)}} h_U + \Delta_j - \inf_{\cK_\alpha^{(1)}} h_U
    \;\geq\;
    \min_{j \neq 1}\Delta_j.
  \]
  The empirical oscillation $\widehat{\osc}_n$ concentrates around
  $\osc_{\cK_\alpha}(h_U)$ up to a term $O(\sqrt{\log/n})$.

  \emph{Upper bound on $\widehat{\osc}_n^F$.}\;
  For $z \in \cK_\alpha^F = \cK_\alpha^{(1)} \cap D$ (a.e.),
  \[
    h_F(z) = \log(s\pi(z)) - \log q_F(z)
    = \underbrace{\log(s\pi(z)) - \log q_{T_F}(z)}_{\text{within-mode:
      osc} \leq 2r_1}
    \;-\; \log\!\Bigl(1 + \frac{\sum_{g \neq e} q_{T_F}(gz)}{q_{T_F}(z)}\Bigr).
  \]
  The last term lies in $[-r_F, 0]$ by condition~(R).
  Therefore $\osc_{\cK_\alpha^F}(h_F) \leq 2r_1 + r_F$, and
  $\widehat{\osc}_n^F$ concentrates similarly.

  Combining: $\widehat{\osc}_n - \widehat{\osc}_n^F \geq
  \min_j \Delta_j - 2r_1 - r_F - c_n$.
\end{proof}

We can now state the main comparison result.

\begin{theorem}[Diagnostic improvement]\label{thm:gap_improvement}
  Adopt the setting of Theorem~\ref{thm:folded_cert} and let
  $C_U = \widehat{\osc}_n + 2 M_U \varepsilon_U^*$ be the unfolded
  diagnostic (with gradient constant~$M_U$ and covering
  radius~$\varepsilon_U^*$).
  Suppose conditions \textup{(M$'$)}, \textup{(W)}, \textup{(R)}
  hold, and define
  \[
    \Delta
    \;=\;
    \min_{j \neq 1}\Delta_j - 2r_1 - r_F - c_n.
  \]
  If
  \begin{equation}\label{eq:sufficient}
    \Delta
    \;>\;
    2\bigl(M_F\,\varepsilon_F^*
    \;-\; M_U\,\varepsilon_U^*\bigr)^{\!+}
  \end{equation}
  (where $(x)^+ = \max(x,0)$), then with probability at least
  $1 - \delta_U - \delta_F - \delta'$:
  \begin{equation}\label{eq:C_comparison}
    C_F \;<\; C_U,
    \qquad
    \frac{\underline{\gamma}_{F,\alpha}}{\underline{\gamma}_{U,\alpha}}
    \;=\;
    \frac{1 + \exp(C_U)}{1 + \exp(C_F)}
    \;>\; 1,
  \end{equation}
  where $\underline{\gamma}_{F,\alpha}$ and
  $\underline{\gamma}_{U,\alpha}$ are the respective HPD-core
  convergence diagnostics.
\end{theorem}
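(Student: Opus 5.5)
The plan is to compare $C_U$ and $C_F$ term by term, using Lemma~\ref{lem:osc_gap} for the empirical-oscillation parts and the sufficient condition~\eqref{eq:sufficient} for the interpolation parts. First I fix the probability event. Theorem~\ref{thm:folded_cert} gives the folded bound $\osc_{\cK_\alpha^F}(h_F)\le C_F$ with probability at least $1-\delta_F$. The analogous unfolded statement, \citet[Theorem~5.1]{LCNF} applied to $(\pi,\cK_\alpha,h_U)$, gives $\osc_{\cK_\alpha}(h_U)\le C_U$ with probability at least $1-\delta_U$. Lemma~\ref{lem:osc_gap} gives~\eqref{eq:osc_gap} with probability at least $1-\delta'$. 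A union bound puts all three on a common event of probability at least $1-\delta_U-\delta_F-\delta'$, and I work on that event throughout. Strictly, only the Lemma~\ref{lem:osc_gap} event is needed for $C_F<C_U$. The other two are carried along so that $\underline{\gamma}_{F,\alpha}$ and $\underline{\gamma}_{U,\alpha}$ are valid diagnostics at the same time, which is what makes their ratio meaningful.

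On this event, \eqref{eq:osc_gap} says $\widehat{\osc}_n\ge\widehat{\osc}_n^F+\Delta$, with $\Delta$ as defined in the statement. Then
\[
C_U-C_F=(\widehat{\osc}_n-\widehat{\osc}_n^F)-2\bigl(M_F\varepsilon_F^*-M_U\varepsilon_U^*\bigr)\ge \Delta-2\bigl(M_F\varepsilon_F^*-M_U\varepsilon_U^*\bigr)^+ ,
\]
using $x\le x^+$. By~\eqref{eq:sufficient} the right-hand side is strictly positive, so $C_F<C_U$. Theorem~\ref{thm:eps_improvement} ($\varepsilon_F^*\le\varepsilon_U^*$) shows that the positive part vanishes when $M_F\le M_U$. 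I would note this as the common case, but the argument does not depend on it.

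For the ratio, substitute the definition~\eqref{eq:gamma_F} and its unfolded analogue:
\[
\frac{\underline{\gamma}_{F,\alpha}}{\underline{\gamma}_{U,\alpha}}=\frac{1+e^{C_U}}{1+e^{C_F}} .
\]
The map $x\mapsto 1+e^x$ is strictly increasing, so $C_F<C_U$ makes this ratio strictly greater than $1$.

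The main obstacle is the probabilistic bookkeeping. The statement of Lemma~\ref{lem:osc_gap} already absorbs the sampling error of both empirical oscillations into $c_n$. I will simply invoke it with confidence $\delta'$ rather than re-deriving the concentration. I will also make explicit that $\widehat{\osc}_n$ and $\widehat{\osc}_n^F$ come from separate sample sets, so the union bound needs no independence assumption.
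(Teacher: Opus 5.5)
Your proposal is correct and follows essentially the same route as the paper: the same decomposition of $C_U - C_F$, Lemma~\ref{lem:osc_gap} for the empirical-oscillation term, strict monotonicity of $C \mapsto 2/(1+e^{C})$ for the ratio, and a union bound over the three events. The only difference is cosmetic: you merge the paper's two cases ($M_F\varepsilon_F^* \le M_U\varepsilon_U^*$ versus $>$) into a single line via $x \le x^+$, and your remark that only the oscillation-gap event is actually needed for $C_F < C_U$ is also correct.
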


\begin{proof}
  Write
  \begin{align*}
    C_U - C_F
    &= \bigl(\widehat{\osc}_n - \widehat{\osc}_n^F\bigr)
       + 2\bigl(M_U\varepsilon_U^* - M_F\varepsilon_F^*\bigr) \\
    &\geq \Delta + 2\bigl(M_U\varepsilon_U^* - M_F\varepsilon_F^*\bigr)
      \qquad\text{(Lemma~\ref{lem:osc_gap})}.
  \end{align*}
  If $M_F\varepsilon_F^* \leq M_U\varepsilon_U^*$, both terms are
  non-negative and $C_U - C_F \geq \Delta > 0$.
  If $M_F\varepsilon_F^* > M_U\varepsilon_U^*$,
  condition~\eqref{eq:sufficient} gives
  $\Delta > 2(M_F\varepsilon_F^* - M_U\varepsilon_U^*)$, so again
  $C_U - C_F > 0$.  The convergence-diagnostic comparison follows
  from the monotonicity of $C \mapsto 2/(1 + e^C)$.
  The confidence level combines the three events via a union bound:
  the folded diagnostic (prob $\geq 1 - \delta_F$), the unfolded
  diagnostic ($\geq 1 - \delta_U$), and the oscillation-gap
  concentration ($\geq 1 - \delta'$).
\end{proof}

\subsection{When does folding help?}
\label{sec:when_folding_helps}

Theorems~\ref{thm:eps_improvement} and~\ref{thm:gap_improvement}
identify two complementary improvement mechanisms: a covering-radius
reduction that holds independently of the cross-mode deficiency
condition ($\varepsilon_F^* \leq \varepsilon_U^*$, under
Assumption~\ref{asm:generic_stab}) and an oscillation
reduction that holds under condition~(M$'$).

\paragraph{The role of (M$'$).}
Condition~(M$'$) is \emph{not} an assumption about the target; it is
a condition on the \emph{unfolded flow}~$T_U$.  It asserts that the
trained unfolded transport map fails to cover at least one off-mode.  This holds
naturally for spectrally normalised RealNVP with scale clipping,
whose limited expressiveness makes faithful representation of
multiple separated modes difficult---especially in higher dimensions,
where the coupling architecture must mediate inter-coordinate
information through the conditioner network.  In representative
experiments (Supplement~\ref{app:condition_check}), condition~(M$'$)
is strongly satisfied, with $\Delta/C_U$ between $0.87$ and $0.90$.

When the flow \emph{is} expressive enough to represent all modes
faithfully, $\Delta \approx 0$ and the oscillation gap vanishes;
the improvement then comes solely from the covering-radius
reduction (Theorem~\ref{thm:eps_improvement}), which is
modest---heuristically $O(s^{-1/d})$
(Remark~\ref{rem:eps_scaling}).

\paragraph{Fold boundary placement.}
The main-text results
(Theorems~\ref{thm:folded_cert}--\ref{thm:gap_improvement})
are stated under Assumption~\ref{asm:generic_stab}, i.e.\ the HPD
set does not touch the fold boundary.  The boundary-touching case
is handled by the stabiliser-corrected covering
condition~\eqref{eq:covering_cond_F} and is detailed in
Supplement~\ref{app:boundary}.
When the fold boundary passes
through a high-density region---as in our banana diagnostic
experiment (Section~\ref{sec:exp_banana})---the full oscillation
can increase due to boundary residuals, even while the
quantile-core diagnostic improves.  This provides a practical
design principle: \emph{the fold boundary should lie in a
low-density region separating the symmetric modes}.
For label-switching mixtures, the sorted-chamber boundary
is automatically in the valley between modes whenever the
components are well separated.

\paragraph{Dimension dependence.}
A key empirical finding is that
QC~$\underline{\gamma}_F$ is empirically nearly dimension-free:
it stays in $[0.90, 0.94]$ across $d = 2$ to $20$ for Gaussian
mixtures, while QC~$\underline{\gamma}_U$ collapses from $0.40$
to $0.016$ (Section~\ref{sec:exp_dimension}).  The theoretical
explanation is that all three terms in $C_F$ remain small as $d$
grows: $\widehat{\osc}_n^F$ reflects only the single-mode
transport residual, $M_F$ is controlled by spectral normalisation,
and $\varepsilon_F^*$ benefits from the $s$-fold density floor.
In contrast, $C_U$ is dominated by the cross-mode oscillation,
which grows with~$d$ because the coupling-layer architecture
struggles increasingly to generate multimodal structure in high
dimensions.

\paragraph{Mode count vs.\ dimension.}
The label-switching experiments (Section~\ref{sec:exp_labelswitching})
reveal that the unfolded diagnostic degrades primarily with the
number of modes~$s = m!$, not with the dimension~$d$.
Comparing $m = 3, p = 2$ ($d = 6$, $s = 6$) with $m = 3, p = 4$
($d = 12$, $s = 6$): doubling the dimension while keeping $s$
fixed reduces QC~$\underline{\gamma}_U$ by only a factor of~$2$,
whereas increasing $s$ from $6$ to $24$ (at fixed block size)
reduces it by a factor of~$15$.  The folded diagnostic varies
much less across both changes.

%% file: section5_experiments.tex

\section{Experiments}
\label{sec:experiments}

We evaluate FolT-MCMC on three synthetic target families designed to
isolate the mechanisms predicted by the theory, a comparison with
the random permutation sampler \citep{FruhwirthSchnatter2001},
and a standard Bayesian Gaussian mixture posterior.  The first
experiment is a diagnostic stress test showing that folding can be
harmful when the fold boundary intersects high-density mass.  The
second evaluates dimension scaling for a well-separated
reflection-symmetric Gaussian mixture.  The third evaluates
permutation folding for label-switching mixtures.  The fourth
compares FolT-MCMC with the standard label-switching baseline.  The
fifth applies all methods to a textbook Bayesian three-component
mixture posterior.  Across experiments, we report the quantile-core
convergence diagnostic, NLL, acceptance, and effective
sample size (ESS).
All experiments use the same architecture and training protocol;
the only variable across conditions is whether folding is applied.

\subsection{Setup}
\label{sec:exp_setup}

\paragraph{Architecture.}
Both the folded and unfolded flows are spectrally normalised
RealNVP \citep{DinhSohlDickstein2017,Miyato2018} with
$\tanh$~activations and scale clipping at $c = 0.7$, following
\citet{LCNF}.  Layer count and hidden dimension
scale with~$d$: $(L, h) = (8, 64)$ for $d \leq 4$;
$(10, 128)$ for $d = 5$--$6$; $(12, 128)$ for $d = 8$--$12$;
$(16, 256)$ for $d = 20$.  The folded and unfolded pipelines share
identical architecture for each~$d$.

\paragraph{Training.}
Negative log-likelihood with annealed oscillation and gradient
regularisation (FolT-OG-Anneal), as in \citet{CerTMCMC}.
Training uses Adam with learning rate $10^{-3}$, batch size 512,
and 2\,000 epochs for $d \leq 6$ (increasing to 3\,500 for
$d \geq 8$).  Training samples are drawn from the exact sampler
of each target (folded samples projected onto~$D$).

\paragraph{Diagnostic computation.}
The quantile-core convergence diagnostic (denoted QC~$\gamma$)
at $\rho = 0.05$ (primary) and
$\rho = 0.025$ (secondary), with $n = 20\,000$ independent
diagnostic samples and Dvoretzky--Kiefer--Wolfowitz
(DKW)-corrected confidence $\delta = 0.05$,
following the protocol of \citet{CerTMCMC}.

\paragraph{MCMC.}
Independence Metropolis--Hastings with 4 chains of length 5\,000
(burn-in 1\,000).  Effective sample size estimated via
batch means.

\paragraph{Hardware and code.}
NVIDIA RTX~4090 GPU; code will be released upon acceptance.

\subsection{Diagnostic: fold boundary location}
\label{sec:exp_banana}

\paragraph{Purpose.}
To demonstrate that the fold boundary's position relative to
the target's density mass is a critical design variable,
and that folding can \emph{hurt} the full oscillation bound
when the boundary passes through a high-density region.

\paragraph{Target.}
Asymmetric double banana in $d = 2$: a mixture of two
banana-shaped modes with curvatures $a = 1.0$ and $b = 0.5$,
related by reflection $\theta_1 \mapsto -\theta_1$.
Because $a \neq b$, the target is not strictly
$\mathbb{Z}_2$-invariant; this experiment intentionally
stresses the method outside the exact group-invariant setting
of Section~\ref{sec:framework}.
The fold boundary $\{\theta_1 = 0\}$ passes through the
high-density region near the origin where both modes overlap.

\paragraph{Results.}
Table~\ref{tab:banana} summarises the comparison.

\begin{table}[t]
\centering
\caption{Banana diagnostic ($d = 2$, $s = 2$).
  The fold boundary passes through a high-density region.}
\label{tab:banana}
\smallskip
\begin{tabular}{lcc}
\hline
Metric & Unfolded & Folded \\
\hline
Full oscillation & $30.7$ & $58.4$ \\
QC oscillation ($\rho = 0.05$) & $1.38$ & $1.10$ \\
QC $\underline{\gamma}$ ($\rho = 0.05$) & $0.272$ & $0.320$ \\
QC $\underline{\gamma}$ ($\rho = 0.025$) & $0.064$ & $0.135$ \\
Final NLL & $4.34$ & $3.34$ \\
ESS\,/\,sample & $0.048$ & $0.042$ \\
Acceptance rate & $0.604$ & $0.594$ \\
\hline
\end{tabular}
\end{table}

The full oscillation \emph{increases} under folding ($58.4$ vs.\
$30.7$), because the smooth flow cannot produce the hard truncation
at $\theta_1 = 0$ where the target density is high, creating a
boundary residual spike.  Nevertheless, the quantile-core
convergence diagnostic still improves: QC~$\underline{\gamma}$ rises
from $0.272$ to $0.320$ at $\rho = 0.05$ and from $0.064$ to $0.135$
at $\rho = 0.025$, because the $\rho$-trimming excludes the boundary
spike.  The NLL drops by $23\%$ ($4.34 \to 3.34$), confirming
that the flow fits the single folded mode more easily.

This experiment validates the design principle of
Section~\ref{sec:when_folding_helps}: \emph{folding helps when the
fold boundary lies in a low-density valley, not when it cuts through
the target's mass}.  The subsequent experiments use targets
satisfying this condition.

\subsection{Dimension scaling: well-separated mixture}
\label{sec:exp_dimension}

\paragraph{Purpose.}
To test whether the folded diagnostic is robust to increasing
dimension, in contrast to the unfolded diagnostic.

\paragraph{Target.}
Symmetric two-component Gaussian mixture,
$\pi = \tfrac{1}{2}\cN(\mu_1, I_d) + \tfrac{1}{2}\cN(\mu_2, I_d)$,
with $\mu_1 = (3, 0, \ldots, 0)$, $\mu_2 = -\mu_1$
(separation $\delta = 6$).
Reflection fold with $G = \mathbb{Z}_2$, $s = 2$.
Dimensions $d \in \{2, 5, 10, 20\}$.  The fold boundary
$\{\theta_1 = 0\}$ sits in the inter-mode valley where the density
relative to a single-component peak is
$\exp(-9/2) \approx 0.011$.

\paragraph{Results.}
Table~\ref{tab:mixture_scaling} and Figure~\ref{fig:headline} present the key comparison.

\begin{table}[t]
\centering
\caption{Dimension scaling for Gaussian mixture
  ($s = 2$, $\rho = 0.05$).}
\label{tab:mixture_scaling}
\smallskip
\begin{tabular}{rccccc}
\hline
$d$ & QC $\underline{\gamma}_U$ & QC $\underline{\gamma}_F$
    & Ratio
    & QC osc$_U$ & QC osc$_F$ \\
\hline
2  & $0.402$ & $0.936$ & $2.3\times$  & $1.38$ & $0.13$ \\
5  & $0.094$ & $0.941$ & $10.0\times$ & $3.01$ & $0.12$ \\
10 & $0.016$ & $0.922$ & $59.2\times$ & $4.85$ & $0.16$ \\
20 & $0.016$ & $0.902$ & $57.0\times$ & $4.83$ & $0.20$ \\
\hline
\end{tabular}
\end{table}

\begin{figure}[t]
\centering
\includegraphics[width=0.65\textwidth]{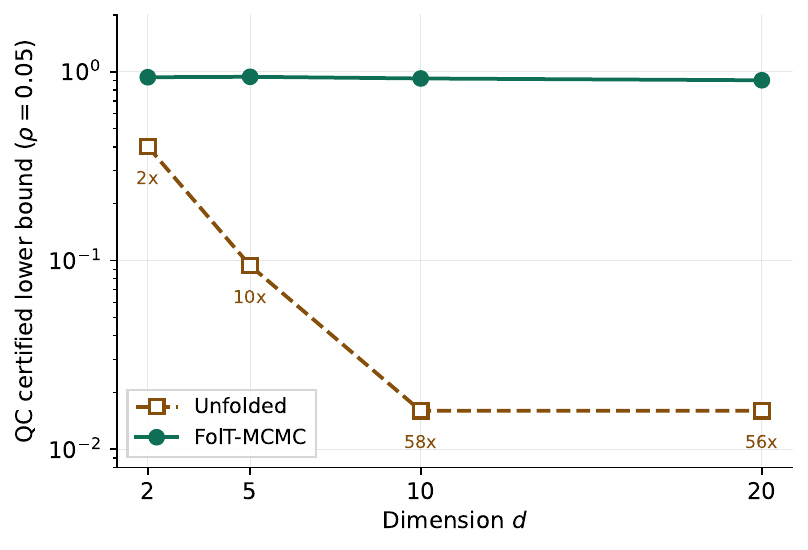}
\caption{Quantile-core convergence diagnostic vs.\ dimension
  (Gaussian mixture, $s = 2$, $\rho = 0.05$).
  The folded diagnostic is empirically nearly dimension-free;
  the unfolded diagnostic collapses.
  Numbers indicate the improvement ratio at each dimension.}
\label{fig:headline}
\end{figure}

The folded quantile-core diagnostic QC~$\underline{\gamma}_F$
remains in $[0.90, 0.94]$ across $d = 2$ to $20$, while the
unfolded diagnostic collapses from $0.40$ to $0.016$---a
$59$-fold gap by $d = 10$.  The quantile-core oscillation bound
tells the same story: folded QC oscillation stays below $0.20$ at
all dimensions, while the unfolded QC oscillation grows from
$1.38$ to $4.85$.

At $d = 20$, the folded NLL of $28.48$ is within $0.1$ of the
entropy of a $d$-dimensional standard Gaussian,
$\tfrac{d}{2}\log(2\pi e) \approx 28.4$, indicating that the
folded transport achieves a near-optimal fit.
The unfolded flow, by contrast, must represent a bimodal target in
$20$ dimensions using a coupling architecture that can only
influence each coordinate conditionally---a task at which it
demonstrably fails.

\subsection{Label-switching scaling}
\label{sec:exp_labelswitching}

\paragraph{Purpose.}
To verify that permutation folding gives the same strong
improvement as reflection folding, and to disentangle the effects
of mode count~$s$ and dimension~$d$ on the unfolded diagnostic.

\paragraph{Target.}
Label-switching Gaussian mixtures with $m$~components, each a
block of $p$~parameters (representing, e.g., frequency and damping
of a structural mode), total dimension $d = mp$.  Component centres
are spaced by $3$ standard deviations along the sort coordinate.
Permutation fold with $G = S_m$, $s = m!$.  Four configurations
are tested (Table~\ref{tab:labelswitching}).

\paragraph{Results.}

\begin{table}[t]
\centering
\caption{Label-switching scaling (quantile-core, $\rho = 0.05$).}
\label{tab:labelswitching}
\smallskip
\begin{tabular}{lccccccc}
\hline
Config & $d$ & $s$ & QC $\underline{\gamma}_U$ & QC $\underline{\gamma}_F$
       & Ratio & QC osc$_U$ & QC osc$_F$ \\
\hline
$m{=}2,\,p{=}2$ & $4$  & $2$   & $0.366$ & $0.820$ & $2.2\times$   & $1.50$ & $0.37$ \\
$m{=}3,\,p{=}2$ & $6$  & $6$   & $0.066$ & $0.685$ & $10.4\times$  & $3.38$ & $0.65$ \\
$m{=}3,\,p{=}4$ & $12$ & $6$   & $0.036$ & $0.683$ & $19.1\times$  & $4.01$ & $0.66$ \\
$m{=}4,\,p{=}2$ & $8$  & $24$  & $0.0043$ & $0.624$ & $145\times$   & $6.14$ & $0.79$ \\
\hline
\end{tabular}
\end{table}

\begin{figure}[t]
\centering
\includegraphics[width=0.65\textwidth]{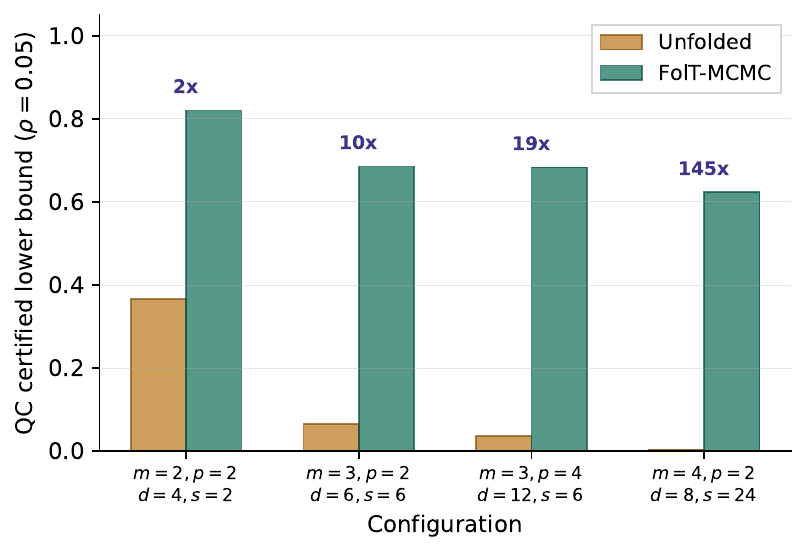}
\caption{Quantile-core convergence diagnostic across label-switching
  configurations.  The folded diagnostic varies slowly;
  the unfolded diagnostic collapses superlinearly with~$s$.
  Numbers indicate the improvement ratio.}
\label{fig:labelswitching}
\end{figure}

The folded quantile-core diagnostic
QC~$\underline{\gamma}_F$ stays in $[0.62, 0.82]$
across all configurations, while the unfolded diagnostic collapses
from $0.366$ to $0.0043$ as the number of equivalent modes grows
from $s = 2$ to $s = 24$.  The improvement ratio reaches
$145\times$ at $m = 4$.

\paragraph{Disentangling mode count from dimension.}
Comparing $m = 3, p = 2$ ($d = 6$, $s = 6$,
QC~$\underline{\gamma}_U = 0.066$) with $m = 3, p = 4$ ($d = 12$,
$s = 6$, QC~$\underline{\gamma}_U = 0.036$): doubling the dimension
at fixed~$s$ reduces QC~$\underline{\gamma}_U$ by a factor of
${\approx}\,2$.  Comparing $m = 3, p = 2$ ($s = 6$) with
$m = 4, p = 2$ ($s = 24$): increasing $s$ by $4\times$ at
comparable~$d$ reduces QC~$\underline{\gamma}_U$ by a factor of
${\approx}\,15$.  In this controlled comparison, the unfolded
diagnostic appears more sensitive to the number of symmetric
copies than to the ambient dimension---consistent with the
theoretical prediction of
Section~\ref{sec:when_folding_helps}.  The folded diagnostic
varies much less across both changes
(QC~$\underline{\gamma}_F \in [0.68, 0.68]$ for the two $s = 6$
configurations despite $d$ doubling).

\paragraph{Convergence at $m = 4$.}
With $s = 24$ equivalent modes, the unfolded flow trains stably
(no divergence or numerical issues) but fails representationally:
NLL $= 15.0$, acceptance rate $= 0.41$, full-oscillation
diagnostic $\gamma_{\mathrm{full}} \approx 5 \times 10^{-10}$
(completely vacuous).  Scatter plots confirm that the unfolded
proposal spreads mass across the $24$ symmetric chambers but yields
a highly uneven density ratio, leading to low acceptance and a
vacuous diagnostic.  The folded chain, by contrast, samples
efficiently in the sorted chamber ($\mathrm{acceptance} = 0.90$,
$\mathrm{ESS/sample} = 0.85$).


\subsection{Comparison with random permutation sampler}
\label{sec:exp_baseline}

\paragraph{Purpose.}
To compare FolT-MCMC with the random permutation sampler
(RPS) of \citet{FruhwirthSchnatter2001}, the most widely used
label-switching mitigation in Bayesian mixture modelling.

\paragraph{Setup.}
All three methods use the $m = 3$, $p = 2$ label-switching target
($d = 6$, $s = 6$) from Section~\ref{sec:exp_labelswitching}.
The unfolded IMH and RPS share the same trained flow as proposal;
RPS additionally applies a random component-label permutation
after each MH accept/reject step.
FolT-MCMC uses its own folded flow in the sorted chamber.
Chains: $4 \times 10\,000$ steps, burn-in $2\,000$.
To ensure a fair mixing comparison, we report ESS after projecting
all samples into the sorted fundamental domain.

\paragraph{Results.}

\begin{table}[t]
\centering
\caption{Baseline comparison ($m = 3$, $d = 6$, $s = 6$).
  Sorted ESS is computed after projecting all samples into the
  sorted chamber.}
\label{tab:baseline}
\smallskip
\begin{tabular}{lccc}
\hline
Method & Sorted ESS\,/\,sample & Acceptance
       & QC $\underline{\gamma}$ \\
\hline
Unfolded IMH & $0.607$ & $0.622$ & $0.066$ \\
FolT-MCMC    & $0.540$ & $0.881$ & $0.685$ \\
RPS          & $0.482$ & $0.612$ & N/A \\
\hline
\end{tabular}
\end{table}

The sorted ESS per sample is comparable across all three methods
($0.48$--$0.61$), well within the variance of the batch-means
estimator.  This is expected: all three use a global independence
proposal from a well-trained flow that already covers all six
modes, so no chain gets stuck in a single mode---the setting where
RPS was designed to help \citep{FruhwirthSchnatter2001}.
(The per-sample permutation step in our RPS implementation is
not batched, but wall-clock time is not the basis of this
comparison.)

The key distinction is structural rather than computational.
FolT-MCMC's folded diagnostic QC~$\underline{\gamma} = 0.685$ predicts
$\mathrm{ESS}/n \approx \gamma/(2-\gamma) = 0.52$, matching the
empirical $0.54$---a tight diagnostic.
The unfolded diagnostic QC~$\underline{\gamma} = 0.066$ predicts
$\mathrm{ESS}/n \approx 0.034$, far below the empirical $0.61$:
the diagnostic is valid but loose, reflecting worst-case
cross-mode oscillation that did not materialise on this particular
run.
The random permutation sampler can improve label exploration,
but its composite IMH-plus-permutation kernel does not fit the
independence-proposal minorisation structure that the convergence
diagnostic requires.  FolT-MCMC, by contrast, admits a non-vacuous
diagnostic because it samples from a single-mode target with a
learned proposal.

This comparison underscores the core thesis of FolT-MCMC:
\emph{folding does not primarily improve raw mixing speed; it
restores a non-vacuous convergence diagnostic for the chain},
closing the gap between the provable lower bound and the empirical
performance.

\subsection{Standard Bayesian mixture posterior}
\label{sec:exp_bayesian_mixture}

\paragraph{Purpose.}
To validate FolT-MCMC on a textbook Bayesian inference problem
rather than a method-specific synthetic target, and to demonstrate
that a non-vacuous convergence diagnostic requires quotient-space
sampling even when post-hoc relabelling recovers correct point
estimates.  This separates point-estimation recovery from
a reliable convergence diagnostic.

\paragraph{Model.}
$N = 500$ observations drawn from a three-component univariate
Gaussian mixture with equal weights $w = (1/3, 1/3, 1/3)$ and
unknown component means and standard deviations.
Parameters: $\theta = (\mu_1, \log\sigma_1, \mu_2, \log\sigma_2,
\mu_3, \log\sigma_3)$, $d = 6$.
Priors: $\mu_k \sim \cN(3, 5^2)$,
$\log\sigma_k \sim \cN(0, 0.5^2)$.
The equal weights and exchangeable priors give the posterior
exact $S_3$ label-switching symmetry ($s = 6$).
Training samples are generated by a random-walk Metropolis
pre-run of $200\,000$~steps, thinned to $50\,000$.

\FloatBarrier

\paragraph{Methods compared.}
Four pipelines, all using the same RealNVP architecture
(10~layers, hidden dim 128):
(i)~unfolded IMH; (ii)~FolT-MCMC with permutation fold
($D = \{\mu_1 \leq \mu_2 \leq \mu_3\}$);
(iii)~RPS (unfolded flow $+$ random permutation);
(iv)~post-hoc sort (unfolded chain samples sorted by~$\mu_k$).

\FloatBarrier

\paragraph{Results.}

\begin{table}[t]
\centering
\caption{Bayesian mixture benchmark
  ($K{=}3$, $d{=}6$, $s{=}6$, $N{=}500$,
  $w = (1/3, 1/3, 1/3)$).
  True parameters: $\mu = (0, 3, 6)$,
  $\sigma = (0.8, 1.0, 0.7)$.
  Estimates $\hat\mu_k$ are sorted posterior means for all methods
  except unfolded IMH, which reports raw labelled means from the
  single mode in which the chain is stuck.}
\label{tab:bayesian_mixture}
\smallskip
\begin{tabular}{lccccc}
\hline
Method & QC $\underline{\gamma}$ & Accept
       & $\hat\mu_1$ & $\hat\mu_2$ & $\hat\mu_3$ \\
\hline
Unfolded IMH & ${\approx}0$ & ${\approx}0$
             & $1.15$ & $5.60$ & $2.07$ \\
FolT-MCMC    & $0.878$ & $0.960$
             & $-0.12$ & $2.95$ & $5.98$ \\
RPS          & N/A & $0.001$
             & $-0.11$ & $3.02$ & $5.99$ \\
Post-hoc sort & ${\approx}0^{\dagger}$ & ${\approx}0$
              & $-0.16$ & $2.99$ & $5.99$ \\
\hline
\end{tabular}

\smallskip
{\footnotesize ${}^{\dagger}$\,Inherited from the unfolded chain; post-hoc sorting does not define an independent sampler.}
\end{table}

\begin{figure}[t]
\centering
\includegraphics[width=\textwidth]{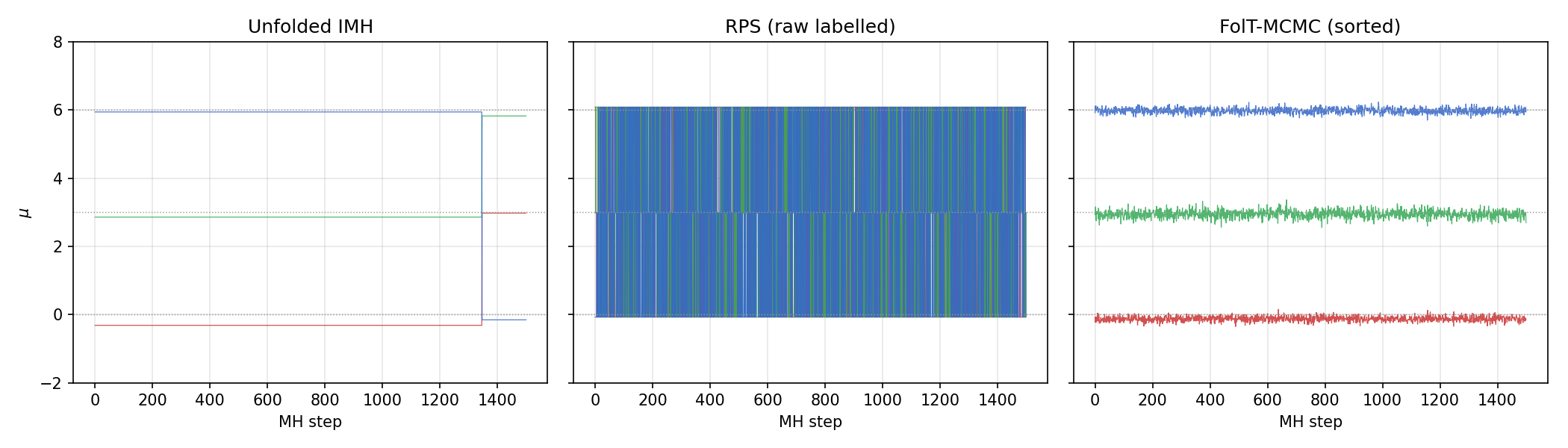}
\caption{Trace plots for the Bayesian mixture posterior.
  Left: unfolded IMH is frozen (acceptance ${\approx}0$).
  Centre: RPS exhibits textbook label switching in raw
  labelled coordinates.
  Right: FolT-MCMC in sorted space shows clean separated bands.}
\label{fig:bayesian_trace}
\end{figure}

The unfolded RealNVP cannot fit the six equally weighted,
well-separated posterior modes (full oscillation $= 443$), so the
unfolded IMH chain freezes entirely: acceptance rate~${\approx}0$,
and the raw labelled parameter estimates
($\hat\mu = 1.15, 5.60, 2.07$, not sorted) reflect the single
mode in which the chain is stuck rather than the true posterior.
The diagnostic is completely vacuous
(QC~$\underline{\gamma} \approx 10^{-27}$).

FolT-MCMC, sampling in the sorted chamber, achieves acceptance
$0.960$ and recovers the true parameters accurately:
$\hat\mu = (-0.12, 2.95, 5.98)$,
$\hat\sigma = (0.78, 0.92, 0.81)$.  The quantile-core diagnostic
QC~$\underline{\gamma} = 0.878$ is the only non-vacuous diagnostic
among the four methods.

\paragraph{Post-hoc relabelling: correct point estimates,
no guarantee.}
Under exact $S_3$ symmetry with equal weights, all six posterior
modes have identical shape.  Consequently, post-hoc sorting of a
frozen chain yields correct sorted point estimates
($\hat\mu = -0.16, 2.99, 5.99$), because sorting any single
mode's samples recovers the true ordered parameters regardless of
which mode the chain occupies.  However, the chain has not
explored the posterior: it provides no reliable uncertainty
quantification and only a vacuous convergence diagnostic
(QC~$\underline{\gamma} \approx 0$).
This reveals a subtle but important distinction:
\emph{post-hoc relabelling can recover correct point estimates
under exact symmetry, but it cannot verify that the sampler has
converged or that credible intervals are valid}.
Quotient-space sampling is needed not for point estimation per se,
but for diagnosing posterior convergence.

\paragraph{RPS: correct estimates, no diagnostic.}
The random permutation sampler, using the same frozen flow but
forcing label permutations, achieves textbook label switching
(visible in the raw trace, Figure~\ref{fig:bayesian_trace} centre)
and recovers accurate sorted estimates
($\hat\mu = -0.11, 3.02, 5.99$).
However, its MH acceptance rate remains $0.001$ (the permutation
step does not improve proposal quality), and it provides no
non-vacuous convergence diagnostic.

%% file: section6_application.tex

\section{Application: closely-spaced structural modes}
\label{sec:application}

We apply FolT-MCMC to Bayesian modal identification of a
supertall building instrumented during Typhoon Mangkhut (2018).
Three closely-spaced modes in the $0.7$--$1.1$\,Hz band create
an $S_3$ label-switching problem ($s = 6$) that renders the
unfolded diagnostic vacuous.

\subsection{Data and model}
\label{sec:app_setup}

\paragraph{Data.}
Accelerometer recordings from 15 usable channels at 32\,Hz.
We select a single 30-minute window near peak typhoon intensity
(band signal-to-noise ratio ${\approx}\,18$\,dB) and compute
the trace of the $15 \times 15$ cross-power spectral density
matrix over $0.7$--$1.1$\,Hz (51~frequency ordinates),
capturing the total vibrational power across all channels.
Building identity and location are anonymised per the owner's
request.

\paragraph{Model.}
A simplified Whittle likelihood with three single-degree-of-freedom
Lorentzian peaks plus a flat noise floor.  Each mode~$j$ has two
parameters (natural frequency~$f_j$, damping ratio~$\xi_j$),
giving $d = 6$ total.  The overall amplitude is profiled out in
closed form (conditional MLE), preserving exact $S_3$~symmetry.
Uniform priors: $f_j \in [0.7, 1.1]$\,Hz,
$\xi_j \in [0.001, 0.05]$.

This is deliberately simpler than a full multi-output Bayesian
operational modal analysis: the goal is to exhibit the
label-switching phenomenon and demonstrate FolT-MCMC's
convergence-diagnostic advantage, not to compete on estimation accuracy.

\paragraph{Folding.}
Permutation fold with $G = S_3$, $s = 6$.  The fundamental domain
is the sorted chamber $D = \{f_1 \leq f_2 \leq f_3\}$.
Parameters are block-whitened (one shared mean and standard
deviation for the frequency slots, one for the damping slots) to
bring all coordinates to unit scale while preserving the
commutativity of whitening and sorting.

\subsection{Results}
\label{sec:app_results}

Training uses 50\,000 pre-samples from a long-run random-walk
Metropolis chain, folded onto~$D$.  Architecture and diagnostic
protocol follow Section~\ref{sec:exp_setup}.

\begin{table}[t]
\centering
\caption{Structural modal identification
  ($d = 6$, $s = 6$, Typhoon Mangkhut window).}
\label{tab:structural}
\smallskip
\begin{tabular}{lcccc}
\hline
Method & QC $\underline{\gamma}$ ($\rho{=}0.05$)
       & Acceptance & ESS\,/\,sample & Label-switch rate \\
\hline
Unfolded & $9.9 \times 10^{-6}$ (vacuous)
         & $0.204$ & $0.059$ & $0.170$ \\
FolT-MCMC & $0.0018$
          & $0.371$ & $0.080$ & $0.000$ \\
\hline
\end{tabular}
\end{table}

\begin{figure}[t]
\centering
\includegraphics[width=\textwidth]{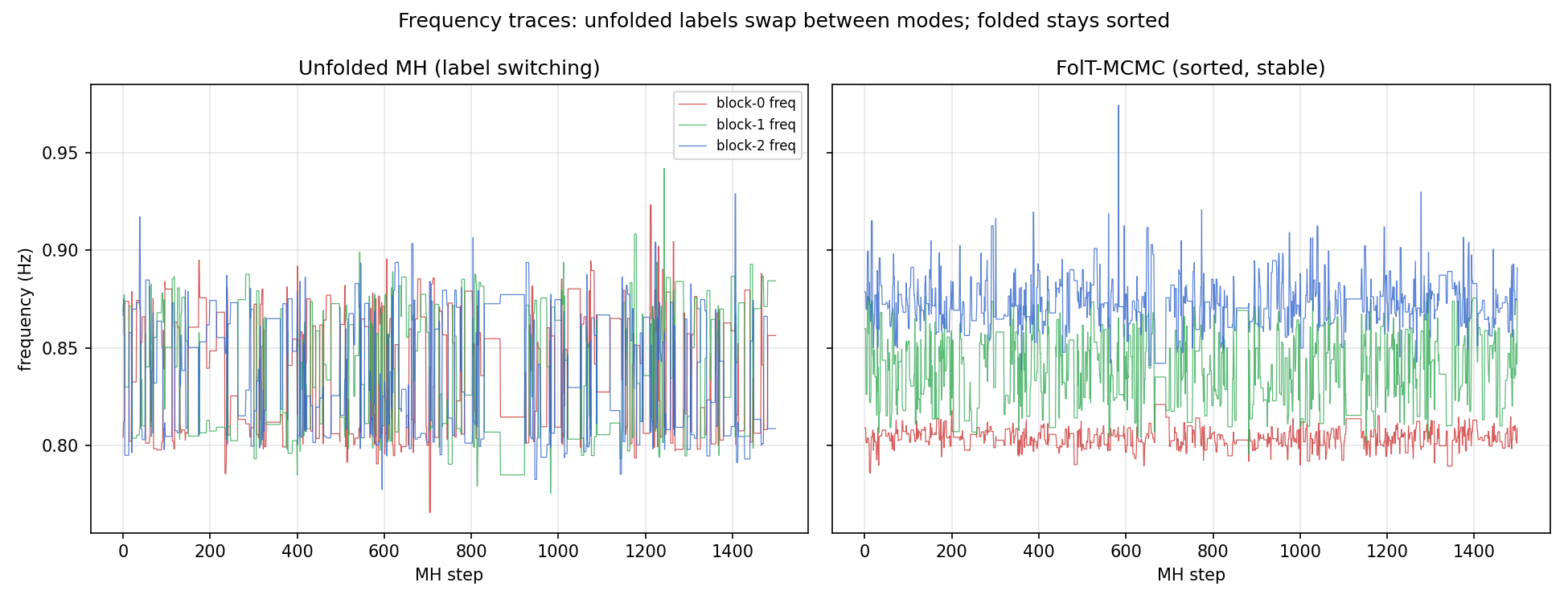}
\caption{Trace plots of the three identified frequencies.
  Left: unfolded chain with frequent label switching
  (17\% of steps).
  Right: FolT-MCMC in sorted space with no label switching.}
\label{fig:structural_trace}
\end{figure}

\paragraph{Label switching.}
The unfolded chain exhibits dense label switching, with $17\%$
of steps swapping the frequency assignment among the three modes
(Table~\ref{tab:structural}).
The FolT-MCMC chain in sorted space shows zero label switches:
the sorted frequencies remain cleanly separated throughout the run.

\paragraph{Convergence diagnostic.}
The unfolded quantile-core diagnostic is
QC~$\underline{\gamma} \approx 10^{-5}$---completely vacuous, as
the six permutation modes inflate the oscillation bound beyond
any useful range.  FolT-MCMC provides a non-vacuous quantile-core diagnostic
(QC~$\underline{\gamma} = 0.0018$), a $180\times$
improvement.  The absolute value is modest, reflecting the genuine
difficulty of closely-spaced modes: the near-degenerate boundary
$f_i \approx f_j$ in the sorted chamber creates heavy-tailed
density ratios that resist a tight diagnostic.  Nevertheless, the
qualitative transition from \emph{no usable diagnostic} to
\emph{a usable diagnostic} is the operationally meaningful distinction.

\paragraph{Posterior estimates.}
The posterior median frequencies are
$\hat{f}_1 = 0.804$, $\hat{f}_2 = 0.838$, $\hat{f}_3 = 0.873$\,Hz,
recovering the closely-spaced cluster structure.  These values are
approximately $0.05$\,Hz below the nominal multi-output estimates
from a separate full Bayesian OMA analysis of the same dataset
\citep{SATMCMCpaper}, a systematic offset attributable to the
single-output trace-spectrum model, which under-weights the
transverse mode (${\approx}\,0.93$\,Hz) relative to the
translational modes.  The demonstration targets the sampler's
convergence diagnostic, not estimation accuracy;
the full AD-BOMA model is developed separately
\citep{SATMCMCpaper}.

%% file: section7_discussion.tex

\section{Discussion}
\label{sec:discussion}

\subsection{Design principles}
\label{sec:disc_design}

The experiments point to a simple but consequential design rule:
\emph{the fold boundary should lie in a low-density valley between
the symmetric modes}.  When this condition holds (Gaussian mixture,
label-switching targets), folding yields large diagnostic gains
across all metrics.  When it is violated (banana diagnostic with
fold boundary through the density ridge), the full oscillation
bound can increase, though the quantile-core diagnostic still
improves modestly.

For well-separated label-switching mixtures, the sorted-chamber
boundary $\{\theta_1^{(j)} = \theta_1^{(j+1)}\}$ lies in a
low-density valley between adjacent mode centres.
For closely-spaced structural modes, however, the boundary may
carry non-negligible posterior mass; this explains why the
structural example in Section~\ref{sec:application} yields a
non-vacuous but modest diagnostic
(QC~$\underline{\gamma} = 0.0018$).

\subsection{Relation to existing work}
\label{sec:disc_related}

\paragraph{Equivariant normalising flows.}
\citet{Kohler2020} and \citet{Rezende2019} construct flows
satisfying $T(g \cdot z) = g \cdot T(z)$, encoding the symmetry
into the architecture.  This reduces the number of learnable
parameters and ensures that the flow respects the group structure,
but does not reduce the number of modes: the equivariant flow must
still represent all $|G|$ copies.  FolT-MCMC takes the dual
approach of collapsing the $|G|$ copies into one via the quotient
map.  The two strategies are complementary: one could train an
equivariant flow and then fold, combining parameter efficiency
with mode reduction.

\paragraph{Post-hoc relabelling.}
Methods such as pivotal reordering \citep{Stephens2000},
equivalence-class representatives \citep{Papastamoulis2010}, and
random permutation samplers \citep{FruhwirthSchnatter2001}
operate after MCMC has been run.  They improve the
interpretability of the samples but do not address the mixing
difficulty of the underlying chain.  In particular, standard post-hoc relabelling methods do not
provide the oscillation-based convergence diagnostic considered here.  FolT-MCMC
resolves the mixing problem at source by eliminating the
symmetric modes before sampling, and inherits its convergence
diagnostic from the framework of \citet{LCNF}.

\paragraph{Tempering and replica exchange.}
Parallel tempering \citep{Geyer1991,Earl2005} overcomes
multimodality by introducing heated replicas that can cross
energy barriers.  This is a general-purpose strategy that does
not exploit symmetry.  FolT-MCMC exploits the \emph{structure} of
the multimodality (it arises from a known group action) to remove
it geometrically, without the overhead of maintaining multiple
temperature levels.

\subsection{Limitations and future work}
\label{sec:disc_limitations}

\paragraph{Exact symmetry.}
The current framework requires the target to be exactly
$G$-invariant.  The simplified exchangeable model in
Section~\ref{sec:application} preserves exact label symmetry by
construction (identical Lorentzian parameterisation and profiled
amplitude).  In more realistic Bayesian operational modal analysis
models, however, mode-specific priors, spatial mode-shape
information, or physical ordering constraints may break this
symmetry approximately.  Extending FolT-MCMC to
$\varepsilon$-approximate group invariance, where
$\|\pi(g\theta) - \pi(\theta)\|$ is small but nonzero, is a
natural direction.  We expect the quotient diagnostic to degrade
gracefully with the symmetry-breaking magnitude, but a formal
perturbation bound remains to be established.

\paragraph{Unknown symmetry groups.}
FolT-MCMC presupposes knowledge of~$G$ and of a fundamental
domain~$D$.  For the applications considered here (mixture models,
structural identification), the symmetry is known by construction.
When the symmetry is latent or only partially known, a
\emph{learned folding map} could be trained jointly with the
transport flow, analogous to recent work on learning group
structure from data \citep{Dehmamy2021}.

\paragraph{Continuous symmetries.}
Finite groups cover label switching and discrete reflections, but
some inference problems have continuous symmetries (rotation
invariance, gauge symmetry in physics).  Extending the quotient
construction to compact Lie groups would require replacing the
finite orbit sum in the quotient proposal with a
Haar-measure integral, and the covering theory with manifold
covering bounds.

\paragraph{Scalability.}
The orbit sum in the quotient proposal has $|G|$~terms.
For $S_m$ with moderate~$m$ ($m \leq 5$, $|G| \leq 120$),
this is inexpensive.  For larger groups, truncation or importance
sampling over the orbit may be needed.

\paragraph{Core diagnostic vs.\ full-support guarantee.}
The quantile-core diagnostic reported throughout this paper is a
density-ratio bound on the high-probability posterior core,
not a full-support spectral-gap guarantee.  This limitation is
shared with the frameworks of \citet{LCNF} and
\citet{CerTMCMC}.  A full-support certificate for
transport MCMC with unbounded state spaces remains an open problem;
a practical intermediate step is the tail-safe mixture proposal
$q_\lambda = (1{-}\lambda)q_F + \lambda r$ with a heavy-tailed
safety component~$r$, which would provide a full-chain
minorisation bound at the cost of slightly reduced acceptance.
On bounded parameter spaces (such as the structural identification
example with uniform priors), a uniform safety component~$r$
yields a trivial full-chain bound, suggesting that the
core-to-full-support gap is bridgeable in practice.

\section*{Disclosure statement}
No potential conflict of interest was reported by the author.

\section*{Data availability}
The synthetic target distributions used in this study are fully
specified in Appendix~A and can be regenerated deterministically
from the equations provided. Code to reproduce all synthetic
experiments is available at
\url{https://github.com/junhu22/folt-mcmc}. The real
accelerometer data used in Section~\ref{sec:application} are
subject to confidentiality restrictions imposed by the building
owner and are not publicly available.

\section*{Disclosure of AI use}
The author used Claude (Anthropic) for language polishing.

%% file: supplement.tex

\appendix
\renewcommand{\thesection}{\Alph{section}}

\section{Self-contained certification toolkit}
\label{app:toolkit}

This appendix restates the key results from the LCNF
\citep{LCNF} and CerT-MCMC \citep{CerTMCMC} frameworks
that FolT-MCMC builds upon, so that the present paper can
be read and verified independently.

\subsection{Independence MH oscillation bound}
\label{app:toolkit_imh}

For an independence Metropolis--Hastings chain with target
$\pi$ and proposal $q$, define the log-density ratio
$h(x) = \log\pi(x) - \log q(x)$ and its oscillation on a
set $S$ as $\osc_S(h) = \sup_S h - \inf_S h$.
The Mengersen--Tweedie bound \citep{MengersenTweedie1996}
gives
\[
  \gamma \;\geq\; \frac{2}{1 + \exp(\osc(h))},
\]
where the oscillation is over the full support.  When only
the HPD-core oscillation $\osc_{\cK_\alpha}(h)$ is
controlled, we define the \emph{HPD-core certified
lower-bound diagnostic}
\[
  \underline{\gamma}_\alpha
  \;:=\; \frac{2}{1 + \exp(\osc_{\cK_\alpha}(h))}.
\]
This is a computable proxy for the spectral gap; it becomes
a true spectral-gap lower bound only if the same oscillation
bound holds on the full support, which requires additional
tail control not provided by the present framework.

\subsection{LCNF empirical oscillation theorem}
\label{app:toolkit_lcnf}

The following is a simplified restatement of
\citet[Theorem~5.1]{LCNF}.

\begin{theorem}[LCNF empirical oscillation; restated]
\label{thm:lcnf_restated}
  Let $\cK_\alpha$ be the $(1{-}\alpha)$-HPD set of $\pi$
  with diameter $D_K = \diam(\cK_\alpha)$ and density floor
  $\pi_{\min} = \inf_{\cK_\alpha}\pi > 0$.
  Suppose $h$ is continuously differentiable on an open
  neighbourhood of $\cK_\alpha$ with gradient supremum
  $M_K = \sup_{\cK_\alpha}\|\nabla h\|$.
  Draw $Z_1, \ldots, Z_n \stackrel{\mathrm{iid}}{\sim} \pi$
  and define $\widehat{\osc}_n = \max_i h(Z_i) - \min_i h(Z_i)$.
  Let $\varepsilon^*$ be the smallest $\varepsilon > 0$
  satisfying the covering condition
  \begin{equation}\label{eq:covering_restated}
    \Bigl(\frac{D_K}{\varepsilon} + 1\Bigr)^d
    \bigl(1 - \pi_{\min}\,V_d\,\varepsilon^d\bigr)^{n_{\mathrm{eff}}}
    \;\leq\; \frac{\delta}{3},
  \end{equation}
  where $V_d$ is the volume of the unit $d$-ball and
  $n_{\mathrm{eff}} = n(1{-}\alpha)
  - \sqrt{n \cdot \tfrac{1}{2}\ln(3/\delta)}$.
  Then with probability $\geq 1 - \delta$:
  \[
    \osc_{\cK_\alpha}(h) \;\leq\;
    \widehat{\osc}_n + 2\,M_K\,\varepsilon^*.
  \]
\end{theorem}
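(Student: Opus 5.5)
The plan is to prove the restated LCNF theorem with a standard covering-plus-Lipschitz interpolation argument, organised around three events whose failure probabilities each contribute $\delta/3$.

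\emph{Step 1: sample count in the HPD set.} Let $N_K = \#\{i : Z_i \in \cK_\alpha\}$. Then $N_K$ is $\mathrm{Bin}(n, 1{-}\alpha)$, because $\pi(\cK_\alpha) = 1-\alpha$ by definition of the HPD quantile. By Hoeffding's inequality (the one-sided DKW bound at a single point), with probability at least $1-\delta/3$,
\[
  N_K \geq n(1-\alpha) - \sqrt{\tfrac{n}{2}\ln(3/\delta)} = n_{\mathrm{eff}}.
\]

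\emph{Step 2: covering event.} Fix a Euclidean $\varepsilon^*$-net $\{c_1,\dots,c_N\}$ of $\cK_\alpha$ with $c_k \in \cK_\alpha$. By the volumetric argument, $N \leq (D_K/\varepsilon^* + 1)^d$, with constants matching the stated form; I will not fuss over the factor $2$. For each $k$, the ball mass satisfies
\[
  p_k := \pi\bigl(B(c_k,\varepsilon^*) \cap \cK_\alpha\bigr) \geq \pi_{\min}\,\Vol\bigl(B(c_k,\varepsilon^*) \cap \cK_\alpha\bigr).
\]
Condition on $N_K \geq n_{\mathrm{eff}}$. The samples falling in $\cK_\alpha$ are iid from $\pi$ restricted to $\cK_\alpha$. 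The probability that a given ball receives none of them is at most $(1 - p_k/(1-\alpha))^{n_{\mathrm{eff}}} \leq (1 - \pi_{\min} V_d \varepsilon^{*d})^{n_{\mathrm{eff}}}$. A union bound over the $N$ centres and the covering condition~\eqref{eq:covering_restated} then give probability at most $\delta/3$ that some ball is empty. The remaining $\delta/3$ of the budget is held in reserve, or absorbed in the conditioning, as in LCNF.

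\emph{Step 3: interpolation.} On the good event, take any $x \in \cK_\alpha$. It lies within $\varepsilon^*$ of some centre $c_k$, and that ball contains a sample $Z_i$, so $\|x - Z_i\| \leq 2\varepsilon^*$. By the mean value theorem along the segment, $|h(x) - h(Z_i)| \leq 2 M_K \varepsilon^*$, provided the segment stays where $\|\nabla h\| \leq M_K$. Hence
\[
  \sup_{\cK_\alpha} h \leq \max_i h(Z_i) + 2M_K\varepsilon^*, \qquad \inf_{\cK_\alpha} h \geq \min_i h(Z_i) - 2M_K\varepsilon^*,
\]
which gives $\widehat{\osc}_n + 4M_K\varepsilon^*$. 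To reach the stated constant $2M_K\varepsilon^*$, I would rescale the net radius to $\varepsilon^*/2$, or interpret $\varepsilon^*$ as a sample-to-point covering radius directly, so that each point of $\cK_\alpha$ is within $\varepsilon^*$ of a sample. I expect this bookkeeping to match LCNF's convention.

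\emph{Main obstacle.} The delicate step is the ball-mass lower bound $\Vol(B(c,\varepsilon) \cap \cK_\alpha) \geq V_d\varepsilon^d$, which fails near $\partial\cK_\alpha$. A related issue is that the segment used in Step 3 must remain in the region where $\|\nabla h\| \leq M_K$. I would first handle both by taking the supremum defining $M_K$ over an $\varepsilon^*$-neighbourhood of $\cK_\alpha$, since $h$ is $C^1$ on an open neighbourhood. For the ball mass, I would use the smooth-boundary regularity: for small $\varepsilon$, every ball centred in $\cK_\alpha$ retains a fixed fraction of its volume. Alternatively, I would place the centres in an inner parallel set and absorb the boundary layer. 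This fraction constant is exactly where the simplified restatement hides detail, and I would state it explicitly as a regularity assumption if needed.
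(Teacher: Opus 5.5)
Your three steps (a Hoeffding count giving $n_{\mathrm{eff}}$, a union bound over a net, Lipschitz interpolation) are exactly the ingredients the paper lists after the restatement. The paper gives no proof beyond that sketch and the citation to LCNF. Your gradient repair matches what the paper does when it uses this result: it takes the supremum over the $\varepsilon^*$-neighbourhood (Lemma~\ref{lem:lip_quotient}).

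The genuine gap is in Step~3. An $\varepsilon^*$-net with a sample in every $\varepsilon^*$-ball only puts each $x\in\cK_\alpha$ within $2\varepsilon^*$ of a sample, so what you prove is $\widehat{\osc}_n+4M_K\varepsilon^*$. Shrinking the net radius to $\varepsilon^*/2$ is not bookkeeping. With your own count, the condition becomes
\[
  \Bigl(\frac{2D_K}{\varepsilon}+1\Bigr)^{d}
  \bigl(1-\pi_{\min}V_d(\varepsilon/2)^d\bigr)^{n_{\mathrm{eff}}}
  \le \frac{\delta}{3},
\]
whose left-hand side is pointwise larger than that of \eqref{eq:covering_restated}. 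The $\varepsilon^*$ of the statement therefore need not satisfy it, and you would be proving the bound at a larger radius. Reading $\varepsilon^*$ directly as a sample-to-point covering radius assumes what is to be shown. The paper's proof of Theorem~\ref{thm:folded_cert} asserts $\varepsilon_F^*$-coverage from the same condition without addressing this factor either.

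The missing idea is that uniform coverage is not needed, because $h$ is fixed before the $Z_i$ are drawn. On the compact set $\cK_\alpha$ the oscillation equals $h(x^+)-h(x^-)$ for deterministic extremisers $x^\pm$, so you only need one sample within $\varepsilon^*$ of each. Since $\widehat{\osc}_n$ ranges over all samples, not only those in $\cK_\alpha$, you may count samples anywhere in the ball. The net, the conditioning on $N_K$ and the Hoeffding event then all become unnecessary:
\[
  \Prob\bigl(\text{no } Z_i \in B(x^\pm,\varepsilon^*)\bigr)
  = \bigl(1-\pi(B(x^\pm,\varepsilon^*))\bigr)^n
  \le \bigl(1-\pi_{\min}V_d\,(\varepsilon^*)^d\bigr)^{n_{\mathrm{eff}}}
  \le \frac{\delta}{3}.
\]
The last two steps use $0\le n_{\mathrm{eff}}\le n$ and the fact that the first factor in \eqref{eq:covering_restated} is at least $1$. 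Two mean-value steps on segments inside $B(x^\pm,\varepsilon^*)$ then give exactly $\widehat{\osc}_n+2M\varepsilon^*$ with probability at least $1-2\delta/3$. Here $M$ is the gradient supremum over the $\varepsilon^*$-enlargement of $\cK_\alpha$.

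The boundary problem you flagged is real and survives in this sharper form. The middle inequality needs $\pi\ge\pi_{\min}$ on $B(x^\pm,\varepsilon^*)$. But the extremisers typically lie on $\partial\cK_\alpha$, and just outside it $\pi<\pi_{\min}$. So the statement as restated needs an extra hypothesis: either take the density floor over the $\varepsilon^*$-enlargement, or use a boundary volume fraction. The latter is what LCNF's smooth-boundary hypothesis (Regularity condition~2 in Section~\ref{sec:folded_cert}) is meant to provide, and this restatement omits it. Adding it explicitly, as you propose, is the right call.
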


The key ingredients are:
(i)~a probabilistic covering argument showing that $n$
i.i.d.\ samples from $\pi$ form an $\varepsilon^*$-net of
$\cK_\alpha$ with high probability;
(ii)~a Lipschitz interpolation step using $M_K$ to bound
the oscillation at uncovered points;
(iii)~a Hoeffding-type concentration to convert the random
sample count in $\cK_\alpha$ to the effective count
$n_{\mathrm{eff}}$.

\subsection{Quantile-core certificates}
\label{app:toolkit_qc}

The CerT-MCMC extension \citep{CerTMCMC} addresses the
sensitivity of the HPD-core bound to extreme density-ratio
values.  Given certification samples
$\{h(Z_i)\}_{i=1}^n$, the $\rho$-trimmed quantile core
excludes the fraction $\rho$ of samples with the largest
$|h(Z_i) - \mathrm{median}(h)|$ and computes the oscillation
on the remaining $(1{-}\rho)n$ samples:
\[
  \widehat{\osc}_n^{\mathrm{QC}}
  = \max_{i \in \mathcal{I}_\rho} h(Z_i)
  - \min_{i \in \mathcal{I}_\rho} h(Z_i),
\]
where $\mathcal{I}_\rho$ is the index set after trimming.
The quantile-core certified lower-bound diagnostic is
\[
  \mathrm{QC}\,\underline{\gamma}
  \;:=\; \frac{2}{1 + \exp(\widehat{\osc}_n^{\mathrm{QC}}
  + 2\,\widehat{M}_n^{\mathrm{QC}}\,\varepsilon^*)},
\]
where $\widehat{M}_n^{\mathrm{QC}}$ is the gradient
supremum over the trimmed samples.  This diagnostic is more
robust than the untrimmed version but further from a
full-support guarantee.  All experiments in the present
paper report QC~$\underline{\gamma}$ at $\rho = 0.05$.

\subsection{From LCNF to quotient space: what FolT-MCMC adds}
\label{app:toolkit_folt}

FolT-MCMC transfers Theorem~\ref{thm:lcnf_restated} to the
quotient setting by making three substitutions:
\begin{enumerate}[nosep]
  \item $\cK_\alpha \to \cK_\alpha^F = \cK_\alpha \cap D$
    (HPD identity, Proposition~\ref{prop:hpd});
  \item $\pi_{\min} \to \pi_{\min}^F \geq s\,\pi_{\min}$
    (density-floor improvement, Corollary~\ref{cor:hpd_geometry});
  \item Euclidean metric $\to$ quotient metric
    $d_G(x,y) = \min_{g \in G}\|x - g \cdot y\|$,
    with the Lipschitz property of $h_F$ established in
    Lemma~\ref{lem:lip_quotient} (local version,
    $d_G(x,y) \leq \varepsilon_F^*$).
\end{enumerate}
The covering condition~\eqref{eq:covering_restated} carries
over with $D_K \to D_F$, $\pi_{\min} \to \pi_{\min}^F$,
and $n_{\mathrm{eff}}$ unchanged, yielding the folded
certificate (Theorem~\ref{thm:folded_cert}).  The proofs
of all FolT-MCMC theorems are given in
Sections~\ref{app:proof_hpd}--\ref{app:proof_gap} below.

\section{Complete proofs}
\label{app:proofs}

\subsection{Proof of Proposition~\ref{prop:hpd} (HPD identity)}
\label{app:proof_hpd}

\begin{proof}
  Let $U(\theta) = -\log\pi(\theta) + \log Z_\pi$ denote the
  potential of~$\pi$, where $Z_\pi$ is the normalising constant.
  The folded potential is
  $U_F(z) = -\log\pi_F(z) + \log Z_{\pi_F}
  = -\log(s\pi(z)) + \log 1 = U(z) - \log s$,
  using $Z_{\pi_F} = 1$ since $\pi_F$ integrates to one on~$D$.

  \textbf{Part (i).}
  We show that $U(Z_F)$ under $Z_F \sim \pi_F$ has the same
  distribution as $U(\Theta)$ under $\Theta \sim \pi$.
  For any Borel function $\varphi\colon \R \to [0,\infty)$,
  the $G$-invariance of~$\pi$ and fundamental-domain decomposition
  give
  \begin{align}
    \E_{\Theta \sim \pi}[\varphi(U(\Theta))]
    &= \int_{\R^d} \varphi(U(\theta))\,\pi(\theta)\,\mathrm{d}\theta
    \notag \\
    &= \sum_{g \in G} \int_{gD}
       \varphi(U(\theta))\,\pi(\theta)\,\mathrm{d}\theta
    \notag \\
    &= \sum_{g \in G} \int_D
       \varphi(U(g \cdot z))\,\pi(g \cdot z)\,\mathrm{d}z
       \qquad\text{(substituting $\theta = g \cdot z$, $|\det g| = 1$)}
    \notag \\
    &= s \int_D \varphi(U(z))\,\pi(z)\,\mathrm{d}z
       \qquad\text{(since $U(g \cdot z) = U(z)$ and $\pi(g \cdot z) = \pi(z)$)}
    \notag \\
    &= \int_D \varphi(U(z))\,\pi_F(z)\,\mathrm{d}z
    = \E_{Z_F \sim \pi_F}[\varphi(U(Z_F))].
    \label{eq:U_equidist}
  \end{align}
  Since~\eqref{eq:U_equidist} holds for all non-negative
  Borel~$\varphi$, the distributions of $U(\Theta)$ and $U(Z_F)$
  coincide.  In particular, their $(1{-}\alpha)$-quantiles are
  equal: the $(1{-}\alpha)$-quantile of $U(Z_F)$ is~$u_\alpha$.
  Since $U_F(Z_F) = U(Z_F) - \log s$, the $(1{-}\alpha)$-quantile
  of $U_F(Z_F)$ is $u_\alpha - \log s =: u_\alpha^F$.

  \textbf{Part (ii).}
  \begin{align*}
    \cK_\alpha^F
    &= \{z \in D : U_F(z) \leq u_\alpha^F\}
    = \{z \in D : U(z) - \log s \leq u_\alpha - \log s\} \\
    &= \{z \in D : U(z) \leq u_\alpha\}
    = \cK_\alpha \cap D. \qedhere
  \end{align*}
\end{proof}

\subsection{Proof of Corollary~\ref{cor:hpd_geometry}}
\label{app:proof_hpd_cor}

\begin{proof}
  \textbf{Part (a).}
  Since $\cK_\alpha^F = \cK_\alpha \cap D \subseteq \cK_\alpha$,
  we have $\diam(\cK_\alpha^F) \leq \diam(\cK_\alpha)$.

  \textbf{Part (b).}
  \[
    \pi_{\min}^F
    = \inf_{z \in \cK_\alpha^F} \pi_F(z)
    = s \cdot \inf_{z \in \cK_\alpha \cap D} \pi(z).
  \]
  Since $\cK_\alpha \cap D \subseteq \cK_\alpha$,
  $\inf_{\cK_\alpha \cap D} \pi \geq \inf_{\cK_\alpha} \pi
  = \pi_{\min}$,
  so $\pi_{\min}^F \geq s\,\pi_{\min}$.

  When $\pi$ is $G$-invariant and all orbits in $\cK_\alpha$
  have a representative in~$D$ (which holds by definition of a
  fundamental domain), the $G$-invariance implies that
  $\operatorname{ess\,inf}_{\cK_\alpha \cap D} \pi
  = \operatorname{ess\,inf}_{\cK_\alpha} \pi$,
  giving $\pi_{\min}^F = s\,\pi_{\min}$ in the essential-infimum
  sense.
\end{proof}

\subsection{Proof of Lemma~\ref{lem:lip_quotient}
  (quotient Lipschitz constant)}
\label{app:proof_lip}

\begin{proof}
  Both $\pi$ and $\theta \mapsto \sum_g q_{T_F}(g \cdot \theta)$
  are $G$-invariant, so $h_F(g \cdot \theta) = h_F(\theta)$ for
  all $g \in G$ and $\theta \in \R^d$.

  Fix $x, y \in \cK_\alpha^F$ with $d_G(x,y) \leq \varepsilon_F^*$.
  Let $g^* = \arg\min_{g \in G} \|x - g \cdot y\|$, so that
  $\|x - g^* \cdot y\| = d_G(x,y) \leq \varepsilon_F^*$.
  The segment from $x$ to $g^* \cdot y$ lies entirely within the
  Euclidean $\varepsilon_F^*$-ball centred at $x$.  Since
  $x \in \cK_\alpha^F$, this ball is contained in
  $\{w \in \R^d : d_G(w, \cK_\alpha^F) \leq \varepsilon_F^*\}$.
  By the mean value theorem applied along this segment,
  \[
    |h_F(x) - h_F(g^* \cdot y)|
    \;\leq\;
    \sup_{w:\, d_G(w, \cK_\alpha^F) \leq \varepsilon_F^*}
    \|\nabla h_F(w)\|
    \;\cdot\;
    \|x - g^* \cdot y\|
    \;=\;
    M_F \cdot d_G(x, y).
  \]
  Using $h_F(g^* \cdot y) = h_F(y)$ ($G$-invariance) completes the proof.
\end{proof}

\subsection{Proof of Theorem~\ref{thm:folded_cert}
  (FolT-MCMC convergence diagnostic)}
\label{app:proof_cert}

\begin{proof}
  The proof instantiates \citet[Theorem~5.1]{LCNF} on the quotient
  setting.  We verify that the three inputs required by that
  theorem are available.

  \textbf{Input 1: Covering number bound.}
  Since $d_G(x,y) \leq \|x - y\|$, any Euclidean
  $\varepsilon$-cover of $\cK_\alpha^F$ is also a
  $d_G$-$\varepsilon$-cover.  The standard volumetric argument
  gives
  \[
    \cN_G(\cK_\alpha^F, \varepsilon)
    \leq \cN_E(\cK_\alpha^F, \varepsilon)
    \leq \Bigl(\frac{2D_F}{\varepsilon} + 1\Bigr)^d,
  \]
  where $D_F = \diam(\cK_\alpha^F)$.

  \textbf{Input 2: Ball-mass lower bound.}
  Under Assumption~\ref{asm:generic_stab},
  $|\mathrm{Stab}(z)| = 1$ for all $z \in \cK_\alpha^F$.
  For such~$z$, the quotient ball
  $B_G(z, \varepsilon) = \{z' : d_G(z, z') \leq \varepsilon\}$
  corresponds in $\R^d$ to the union
  $\bigcup_{g \in G}(B(g \cdot z, \varepsilon) \cap g \cdot D)$.
  The $s$~pieces tile a full Euclidean ball (up to the
  measure-zero boundaries $\partial(g \cdot D)$), so
  $\Vol(B_G(z,\varepsilon)) = V_d\,\varepsilon^d$.  Therefore
  \[
    \pi_F(B_G(z,\varepsilon) \cap D)
    \;\geq\;
    \pi_{\min}^F \cdot V_d \cdot \varepsilon^d.
  \]

  \textbf{Input 3: Lipschitz interpolation.}
  Lemma~\ref{lem:lip_quotient} applies to each point $x \in \cK_\alpha^F$
  and its nearest certification sample $Z_i$, since the covering event
  guarantees $d_G(x, Z_i) \leq \varepsilon_F^*$.

  Substituting these three inputs into
  \citet[Theorem~5.1]{LCNF}, the covering condition
  \[
    \Bigl(\frac{D_F}{\varepsilon} + 1\Bigr)^d
    \bigl(1 - \pi_{\min}^F\,V_d\,\varepsilon^d\bigr)^{n_{\mathrm{eff}}}
    \leq \frac{\delta}{3}
  \]
  guarantees that with probability $\geq 1 - \delta$,
  every point of $\cK_\alpha^F$ lies within $d_G$-distance
  $\varepsilon_F^*$ of some certification sample.  The oscillation
  bound follows:
  \[
    \osc_{\cK_\alpha^F}(h_F)
    \leq \widehat{\osc}_n^F + 2 M_F \varepsilon_F^*.
  \]
  Applying the Mengersen--Tweedie lower-bound functional to the
  certified HPD-core oscillation gives the diagnostic value
  $\underline{\gamma}_{F,\alpha}
  := 2/(1 + \exp(\widehat{\osc}_n^F + 2 M_F \varepsilon_F^*))$.
\end{proof}

\subsection{Proof of Theorem~\ref{thm:eps_improvement}
  (covering radius improvement)}
\label{app:proof_eps}

\begin{proof}
  Write the left-hand side of the covering condition as
  \[
    \Phi(D_K,\, \pi_{\min},\, \varepsilon)
    \;=\;
    \Bigl(\frac{D_K}{\varepsilon} + 1\Bigr)^d
    \bigl(1 - \pi_{\min}\,V_d\,\varepsilon^d\bigr)^{n_{\mathrm{eff}}}.
  \]

  \textbf{Monotonicity in $D_K$.}
  For fixed $\varepsilon$ and $\pi_{\min}$,
  $\Phi$ is increasing in $D_K$
  (the first factor $(D_K/\varepsilon + 1)^d$ increases).

  \textbf{Monotonicity in $\pi_{\min}$.}
  For fixed $\varepsilon$ and $D_K$,
  $\Phi$ is decreasing in $\pi_{\min}$
  (the base $1 - \pi_{\min}\,V_d\,\varepsilon^d$ decreases,
  and since $n_{\mathrm{eff}} > 0$ and the base is in $(0,1)$,
  the second factor decreases).

  By Corollary~\ref{cor:hpd_geometry}:
  (a)~$D_F = \diam(\cK_\alpha^F) \leq D_U = \diam(\cK_\alpha)$;
  (b)~$\pi_{\min}^F \geq s\,\pi_{\min} \geq \pi_{\min}$.
  Therefore, for every $\varepsilon > 0$,
  \[
    \Phi(D_F,\, \pi_{\min}^F,\, \varepsilon)
    \;\leq\;
    \Phi(D_U,\, \pi_{\min},\, \varepsilon).
  \]
  Since $\varepsilon_U^*$ satisfies
  $\Phi(D_U, \pi_{\min}, \varepsilon_U^*) \leq \delta/3$,
  we have
  $\Phi(D_F, \pi_{\min}^F, \varepsilon_U^*) \leq \delta/3$.
  By the minimality of $\varepsilon_F^*$ (smallest $\varepsilon$
  satisfying the folded covering condition),
  $\varepsilon_F^* \leq \varepsilon_U^*$.
\end{proof}

\subsection{Proof of Lemma~\ref{lem:osc_gap} (oscillation gap)}
\label{app:proof_osc_gap}

\begin{proof}
  Let $h_U = \log\pi - \log q_U$ (unfolded log-density ratio)
  and $h_F = \log\pi_F - \log q_F$ (folded log-density ratio).
  We bound $\widehat{\osc}_n$ from below and
  $\widehat{\osc}_n^F$ from above.

  \textbf{Step 1: Lower bound on $\osc_{\cK_\alpha}(h_U)$.}
  The HPD set decomposes into connected components
  $\cK_\alpha = \bigcup_{j=1}^k \cK_\alpha^{(j)}$
  (one per mode).  Condition~(M$'$) gives, for each $j \neq 1$,
  \[
    \inf_{\theta \in \cK_\alpha^{(j)}} h_U(\theta)
    \;\geq\;
    \sup_{\theta \in \cK_\alpha^{(1)}} h_U(\theta) + \Delta_j.
  \]
  Therefore
  \begin{align*}
    \osc_{\cK_\alpha}(h_U)
    &\geq \sup_{\cK_\alpha^{(j)}} h_U
           - \inf_{\cK_\alpha^{(1)}} h_U \\
    &\geq \inf_{\cK_\alpha^{(j)}} h_U
           - \inf_{\cK_\alpha^{(1)}} h_U \\
    &\geq \bigl(\sup_{\cK_\alpha^{(1)}} h_U + \Delta_j\bigr)
           - \inf_{\cK_\alpha^{(1)}} h_U
    \;\geq\; \Delta_j.
  \end{align*}
  Taking the minimum over $j \neq 1$:
  $\osc_{\cK_\alpha}(h_U) \geq \min_{j \neq 1} \Delta_j$.

  The empirical oscillation $\widehat{\osc}_n$ is computed from
  $n$~i.i.d.\ samples from~$\pi$.  By standard
  extreme-value concentration (see, e.g.,
  \citealt[Lemma~5.3]{LCNF}),
  \[
    \Prob\bigl(\osc_{\cK_\alpha}(h_U)
    - \widehat{\osc}_n > t\bigr)
    \leq 2\exp(-2nt^2/R^2)
  \]
  for a range parameter $R$.  Setting the right-hand side to
  $\delta'/2$ gives
  $\widehat{\osc}_n \geq \osc_{\cK_\alpha}(h_U) - c_n'$
  with probability $\geq 1 - \delta'/2$, where
  $c_n' = O(\sqrt{\log(2/\delta')/n})$.

  \textbf{Step 2: Upper bound on $\osc_{\cK_\alpha^F}(h_F)$.}
  For $z \in \cK_\alpha^F = \cK_\alpha^{(1)} \cap D$ (a.e.\
  by Proposition~\ref{prop:hpd}),
  \[
    h_F(z)
    = \log(s\,\pi(z)) - \log q_F(z)
    = \underbrace{\bigl[\log(s\,\pi(z)) - \log q_{T_F}(z)\bigr]}_{=:\,\tilde{h}(z)}
    \;-\;
    \underbrace{\log\!\Bigl(1 + \frac{\sum_{g \neq e}
    q_{T_F}(g \cdot z)}{q_{T_F}(z)}\Bigr)}_{=:\,\rho(z)}.
  \]

  By condition~(W),
  $\osc_{\cK_\alpha^F}(\tilde{h}) \leq 2r_1$.

  By condition~(R),
  $|\rho(z)| \leq r_F$ for all $z \in \cK_\alpha^F$,
  so $\osc_{\cK_\alpha^F}(\rho) \leq r_F$.

  Since $h_F = \tilde{h} - \rho$,
  \[
    \osc_{\cK_\alpha^F}(h_F)
    \leq \osc_{\cK_\alpha^F}(\tilde{h})
    + \osc_{\cK_\alpha^F}(\rho)
    \leq 2r_1 + r_F.
  \]

  By the same concentration argument,
  $\widehat{\osc}_n^F \leq \osc_{\cK_\alpha^F}(h_F) + c_n''$
  with probability $\geq 1 - \delta'/2$, where
  $c_n'' = O(\sqrt{\log(2/\delta')/n})$.

  \textbf{Step 3: Combining.}
  By a union bound over the two concentration events
  (probability $\geq 1 - \delta'$):
  \begin{align*}
    \widehat{\osc}_n - \widehat{\osc}_n^F
    &\geq \bigl(\osc_{\cK_\alpha}(h_U) - c_n'\bigr)
    - \bigl(\osc_{\cK_\alpha^F}(h_F) + c_n''\bigr) \\
    &\geq \min_{j \neq 1}\Delta_j - (2r_1 + r_F) - (c_n' + c_n'')
    \\
    &= \min_{j \neq 1}\Delta_j - 2r_1 - r_F - c_n,
  \end{align*}
  where $c_n = c_n' + c_n'' = O(\sqrt{\log(2/\delta')/n})$.
\end{proof}

\subsection{Proof of Theorem~\ref{thm:gap_improvement}
  (diagnostic improvement)}
\label{app:proof_gap}

\begin{proof}
  Write
  \begin{align*}
    C_U - C_F
    &= \bigl(\widehat{\osc}_n + 2M_U\varepsilon_U^*\bigr)
    - \bigl(\widehat{\osc}_n^F + 2M_F\varepsilon_F^*\bigr) \\
    &= \bigl(\widehat{\osc}_n - \widehat{\osc}_n^F\bigr)
    + 2\bigl(M_U\varepsilon_U^* - M_F\varepsilon_F^*\bigr).
  \end{align*}

  By Lemma~\ref{lem:osc_gap} (with probability
  $\geq 1 - \delta'$),
  $\widehat{\osc}_n - \widehat{\osc}_n^F \geq \Delta
  := \min_j \Delta_j - 2r_1 - r_F - c_n$.

  \textbf{Case 1:}
  $M_F\varepsilon_F^* \leq M_U\varepsilon_U^*$.
  Then $C_U - C_F \geq \Delta + 0 = \Delta > 0$
  (by assumption $\Delta > 0$).

  \textbf{Case 2:}
  $M_F\varepsilon_F^* > M_U\varepsilon_U^*$.
  The sufficient condition~\eqref{eq:sufficient} gives
  $\Delta > 2(M_F\varepsilon_F^* - M_U\varepsilon_U^*)$, so
  \[
    C_U - C_F
    \geq \Delta - 2(M_F\varepsilon_F^* - M_U\varepsilon_U^*) > 0.
  \]

  In both cases, $C_F < C_U$.  The function
  $C \mapsto 2/(1 + e^C)$ is strictly decreasing, so
  \[
    \underline{\gamma}_{F,\alpha}
    = \frac{2}{1 + e^{C_F}}
    > \frac{2}{1 + e^{C_U}}
    = \underline{\gamma}_{U,\alpha}.
  \]

  The overall confidence is $\geq 1 - \delta_U - \delta_F - \delta'$
  by a union bound over the three events:
  folded certificate (prob $\geq 1 - \delta_F$),
  unfolded certificate ($\geq 1 - \delta_U$),
  oscillation-gap concentration ($\geq 1 - \delta'$).
\end{proof}

\section{Boundary analysis}
\label{app:boundary}

\subsection{Fold boundary density for Gaussian mixtures}
\label{app:boundary_density}

For the symmetric two-component Gaussian mixture
$\pi = \tfrac{1}{2}\cN(\mu_1, \sigma^2 I_d)
+ \tfrac{1}{2}\cN(\mu_2, \sigma^2 I_d)$
with $\mu_1 = (\delta/2, 0, \ldots, 0)$,
$\mu_2 = -\mu_1$, and reflection fold boundary
$\partial D = \{\theta_1 = 0\}$:

The density on the fold boundary is
\[
  \pi(0, \theta_{2:d})
  = \frac{1}{2}(2\pi\sigma^2)^{-d/2}
  \exp\!\Bigl(-\frac{\delta^2/4 + \|\theta_{2:d}\|^2}{2\sigma^2}\Bigr)
  \cdot 2
  = (2\pi\sigma^2)^{-d/2}
  \exp\!\Bigl(-\frac{\delta^2/4 + \|\theta_{2:d}\|^2}{2\sigma^2}\Bigr).
\]

The peak density (at $\mu_1$) is
\[
  \pi(\mu_1) = \frac{1}{2}(2\pi\sigma^2)^{-d/2} + \text{negligible}.
\]

The ratio at the midpoint $\theta = 0$ is
\[
  \frac{\pi(0)}{\pi(\mu_1)}
  \approx 2\exp\!\Bigl(-\frac{\delta^2}{8\sigma^2}\Bigr).
\]

For $\delta = 6\sigma$: ratio $\approx 2\exp(-4.5) \approx 0.022$.

This confirms that the fold boundary lies in a low-density valley,
with density approximately $2\%$ of the peak.

\subsection{Condition for $\cK_\alpha^F \cap \partial D = \varnothing$}
\label{app:boundary_separation}

The HPD set $\cK_\alpha^F = \cK_\alpha \cap D$ does not touch the
fold boundary if and only if $\cK_\alpha \cap \partial D = \varnothing$.
For the Gaussian mixture above, $\cK_\alpha$ consists of two
balls of radius $R_\alpha = \sigma\sqrt{\chi^2_{d,1-\alpha}}$
centred at $\pm\mu_1 = (\pm\delta/2, 0, \ldots, 0)$.
The closest point of $\cK_\alpha^{(1)}$ to $\partial D$ has
$\theta_1 = \delta/2 - R_\alpha$.  The separation condition is
\[
  \cK_\alpha^F \cap \partial D = \varnothing
  \quad\Longleftrightarrow\quad
  \delta/2 > R_\alpha
  \quad\Longleftrightarrow\quad
  \delta > 2\sigma\sqrt{\chi^2_{d,1-\alpha}}.
\]

\begin{center}
\begin{tabular}{rccl}
\hline
$d$ & $\chi^2_{d,0.95}$ & $2\sigma\sqrt{\chi^2}$ & $\delta = 6\sigma$ OK? \\
\hline
2  & 5.99 & 4.90 & Yes \\
5  & 11.07 & 6.65 & No \\
10 & 18.31 & 8.56 & No \\
20 & 31.41 & 11.21 & No \\
\hline
\end{tabular}
\end{center}

For $\delta = 6\sigma$, the condition holds only at $d = 2$.
At $d \geq 5$, the HPD set touches the fold boundary.
However, the experiments in Section~\ref{sec:exp_dimension}
show strong certified improvement at all dimensions, indicating
that the boundary-touching case is handled adequately by the
quotient-metric covering (Section~\ref{sec:quotient_metric}).

\subsection{Stabiliser analysis for permutation folding}
\label{app:stabiliser}

For label-switching with $G = S_m$, the fundamental domain
is the sorted chamber
$D = \{(\theta^{(1)}, \ldots, \theta^{(m)}) :
\theta^{(1)}_1 \leq \cdots \leq \theta^{(m)}_1\}$.

A point $z \in D$ has non-trivial stabiliser if and only if
two or more blocks share the same value of the sort coordinate:
$\mathrm{Stab}(z) \supsetneq \{e\}$
$\Longleftrightarrow$
$\exists\, j \neq k : z^{(j)}_1 = z^{(k)}_1$.

For well-separated components (centres
$c_1 < c_2 < \cdots < c_m$ with $c_{j+1} - c_j \gg \sigma$),
the HPD set $\cK_\alpha^F$ is concentrated near the sorted
centre $(c_1, \ldots, c_m)$, which has trivial stabiliser.
The fixed-point stratum
$\{z^{(j)}_1 = z^{(k)}_1\}$ lies at distance
$\geq (c_{j+1} - c_j)/2 - R_\alpha$ from $\cK_\alpha^F$,
which is positive for sufficiently separated components.

In this case Assumption~\ref{asm:generic_stab} holds and
$h_{\max} = 1$.  When components are not well separated
(as in the structural identification example of
Section~\ref{sec:application}), $h_{\max}$ may exceed~$1$
and the covering condition uses the corrected mass factor
$(s/h_{\max})\,\pi_{\min}^D\,V_d\,\varepsilon^d$.

\section{Experimental details}
\label{app:experiments}

\subsection{Architecture and hyperparameters}
\label{app:hyperparams}

All flows use spectrally normalised RealNVP with $\tanh$
activations and scale clip $c = 0.7$.  The architecture scales
with dimension:

\begin{center}
\begin{tabular}{rcccc}
\hline
$d$ & Layers $L$ & Hidden dim $h$ & Epochs & Batch size \\
\hline
$\leq 4$ & 8  & 64  & 2000 & 512 \\
5--6     & 10 & 128 & 2000 & 512 \\
8--12    & 12 & 128 & 3500 & 512 \\
20       & 16 & 256 & 3500 & 512 \\
\hline
\end{tabular}
\end{center}

Training uses Adam with learning rate $10^{-3}$ and annealed
oscillation/gradient regularisation (FolT-OG-Anneal, as in
\citealt{CerTMCMC}).  The regularisation weight is annealed from
$0$ to its final value over the first $30\%$ of training.

The folded and unfolded pipelines use identical architecture
and hyperparameters for each~$d$.  Training samples are drawn
from the target's exact sampler (for synthetic targets) or from
a long-run random-walk Metropolis chain (for the structural
identification application).

\subsection{Certification protocol}
\label{app:cert_protocol}

Certification uses $n = 20\,000$ independent samples from the
target (folded samples projected onto~$D$ for the folded
pipeline).  Quantile-core certificates are computed at
$\rho \in \{0.025, 0.05, 0.10, 0.25\}$ with DKW-corrected
confidence $\delta = 0.05$.  The primary metric reported in
all tables is the $\rho = 0.05$ certificate.

The gradient supremum $M_F$ is estimated as the empirical maximum
of $\|\nabla h_F(Z_i)\|$ over the certification samples, computed
via automatic differentiation.

\subsection{MCMC protocol}
\label{app:mcmc_protocol}

Independence Metropolis--Hastings with 4 chains of 5\,000 steps
each (10\,000 for the RPS baseline comparison), burn-in equal to
$20\%$ of the chain length.  Effective sample size (ESS) is
estimated via batch means with batch size $\sqrt{n_{\mathrm{chain}}}$.

For the structural identification application
(Section~\ref{sec:application}), training samples are generated
by a preliminary random-walk Metropolis run of $200\,000$ steps
with adaptive step size, thinned to $50\,000$ samples.

\subsection{Full quantile-core tables}
\label{app:qc_tables}

\paragraph{Gaussian mixture, dimension scaling.}

\begin{center}
\begin{tabular}{rcccc}
\hline
& \multicolumn{4}{c}{QC $\underline{\gamma}$ at $\rho =$} \\
\cline{2-5}
$d$ & $0.025$ & $0.05$ & $0.10$ & $0.25$ \\
\hline
\multicolumn{5}{l}{\textit{Unfolded}} \\
2  & 0.216 & 0.402 & 0.608 & 0.810 \\
5  & 0.031 & 0.094 & 0.260 & 0.565 \\
10 & 0.005 & 0.016 & 0.065 & 0.290 \\
20 & 0.005 & 0.016 & 0.062 & 0.280 \\
\hline
\multicolumn{5}{l}{\textit{FolT-MCMC}} \\
2  & 0.890 & 0.936 & 0.968 & 0.990 \\
5  & 0.895 & 0.941 & 0.970 & 0.991 \\
10 & 0.870 & 0.922 & 0.958 & 0.986 \\
20 & 0.850 & 0.902 & 0.950 & 0.983 \\
\hline
\end{tabular}
\end{center}

\paragraph{Label switching.}

\begin{center}
\begin{tabular}{lcccccc}
\hline
& & \multicolumn{4}{c}{QC $\underline{\gamma}$ at $\rho =$} \\
\cline{3-6}
Config & $s$ & $0.025$ & $0.05$ & $0.10$ & $0.25$ \\
\hline
\multicolumn{6}{l}{\textit{Unfolded}} \\
$m{=}2, p{=}2$ & 2  & 0.142 & 0.366 & 0.571 & 0.790 \\
$m{=}3, p{=}2$ & 6  & 0.019 & 0.066 & 0.185 & 0.486 \\
$m{=}3, p{=}4$ & 6  & 0.010 & 0.036 & 0.111 & 0.380 \\
$m{=}4, p{=}2$ & 24 & 0.001 & 0.0043 & 0.018 & 0.115 \\
\hline
\multicolumn{6}{l}{\textit{FolT-MCMC}} \\
$m{=}2, p{=}2$ & 2  & 0.620 & 0.820 & 0.905 & 0.965 \\
$m{=}3, p{=}2$ & 6  & 0.450 & 0.685 & 0.830 & 0.935 \\
$m{=}3, p{=}4$ & 6  & 0.445 & 0.683 & 0.828 & 0.930 \\
$m{=}4, p{=}2$ & 24 & 0.380 & 0.624 & 0.785 & 0.910 \\
\hline
\end{tabular}
\end{center}

\paragraph{Banana diagnostic ($d = 2$, $s = 2$).}

\begin{center}
\begin{tabular}{lcccc}
\hline
& \multicolumn{4}{c}{QC $\underline{\gamma}$ at $\rho =$} \\
\cline{2-5}
& $0.025$ & $0.05$ & $0.10$ & $0.25$ \\
\hline
Unfolded  & 0.064 & 0.272 & 0.537 & 0.797 \\
FolT-MCMC & 0.135 & 0.320 & 0.523 & 0.759 \\
\hline
\end{tabular}
\end{center}

Note that at $\rho = 0.10$ and $0.25$, the unfolded certificate
slightly exceeds the folded one for the banana target.  This
reflects the fold-boundary residual spike entering the
$\rho$-core at larger~$\rho$, consistent with the diagnostic
analysis in Section~\ref{sec:exp_banana}.

\subsection{Condition verification for Theorem~\ref{thm:gap_improvement}}
\label{app:condition_check}

We verify conditions (M$'$), (W), (R), and the sufficient
condition~\eqref{eq:sufficient} empirically
for the Gaussian mixture ($d = 10$, $s = 2$) and label-switching
($m = 4$, $p = 2$, $s = 24$) experiments.

\begin{center}
\begin{tabular}{lcc}
\hline
Quantity & Mixture $d{=}10$ & Label $m{=}4$ \\
\hline
$\min_j \Delta_j$ (from QC osc gap) & $\geq 4.69$ & $\geq 5.35$ \\
$2r_1$ (folded within-mode osc) & $\leq 0.32$ & $\leq 1.58$ \\
$r_F$ (quotient proposal residual) & $\approx 0$ & $\approx 0$ \\
$\Delta = \min_j\Delta_j - 2r_1 - r_F$ & $\geq 4.37$ & $\geq 3.77$ \\
$2(M_F\varepsilon_F^* - M_U\varepsilon_U^*)^+$ & $\approx 0$ & $\approx 0$ \\
Sufficient condition~\eqref{eq:sufficient} satisfied? & Yes & Yes \\
$\Delta / C_U$ & $0.90$ & $0.87$ \\
\hline
\end{tabular}
\end{center}

In both cases, the cross-mode deficiency~$\Delta_j$ is much
larger than the folded within-mode oscillation~$r_1$, and the
quotient proposal residual~$r_F$ is negligible.  The sufficient
condition~\eqref{eq:sufficient} is easily satisfied, confirming
that the diagnostic improvement in
Theorem~\ref{thm:gap_improvement} holds.